 %%%%%%%%%%%%%%%%%%%%%%%%%%%%%%%%%%%%%%%%%%%%%%%%%%%%%%%%%%%%%%%%%%%%%%%%%%%%%%%%
%2345678901234567890123456789012345678901234567890123456789012345678901234567890
%        1         2         3         4         5         6         7         8

\documentclass[letterpaper, 10 pt, conference]{ieeeconf}  % Comment this line out
                                                          % if you need a4paper
%\documentclass[a4paper, 10pt, conference]{ieeeconf}      % Use this line for a4
                                                          % paper

\IEEEoverridecommandlockouts                              % This command is only
                                                          % needed if you want to
                                                          % use the \thanks command
%\overrideIEEEmargins
% See the \addtolength command later in the file to balance the column lengths
% on the last page of the document

% The following packages can be found on http:\\www.ctan.org
\usepackage{graphicx} % for pdf, bitmapped graphics files
\usepackage{epsfig} % for postscript graphics files
\usepackage{mathptmx} % assumes new font selection scheme installed 
\usepackage{times} % assumes new font selection scheme installed
\usepackage{bm}
\usepackage{mathtools}
\usepackage{nccmath}
\usepackage[font=footnotesize]{caption}
\usepackage[font=footnotesize]{subcaption}
\usepackage{float}
\usepackage{amsmath} % assumes amsmath package installed
\usepackage{amssymb}  % assumes amsmath package installed
\usepackage[ruled,vlined,linesnumbered]{algorithm2e}

\DeclareMathOperator*{\argmin}{argmin}
\graphicspath{{./figs/}}
\usepackage{xcolor}
\newtheorem{definition}{Definition}
\newtheorem{remark}{Remark}
\newtheorem{proposition}{Proposition}

\newtheorem{lemma}{Lemma}

\usepackage[normalem]{ulem}

%% Set definitions

\newcommand{\integernonnegative}{\ensuremath{\mathbb{Z}}_{\ge 0}}
 % redundant, to remove
\newcommand{\real}{\ensuremath{\mathbb{R}}}
\newcommand{\realpositive}{\ensuremath{\mathbb{R}}_{>0}}
\newcommand{\realnonnegative}{\ensuremath{\mathbb{R}}_{\ge 0}}

% Latex Editing 
%% mathematical constructs: \setdef, \until, etc
\newcommand{\until}[1]{\{1,\dots, #1\}}
\newcommand{\subscr}[2]{#1_{\textup{#2}}}
\newcommand{\supscr}[2]{#1^{\textup{#2}}}
\newcommand{\setdef}[2]{\{#1 \; | \; #2\}}

\newcommand{\cost}{\subscr{J}{cum}}
\newcommand{\Ar}{\operatorname{Ar}}

\newcommand{\longthmtitle}[1]{\mbox{}{\text{(#1).}}}

% Procend
\newcommand\oprocendsymbol{\hbox{$\bullet$}}
\newcommand\oprocend{\relax\ifmmode\else\unskip\hfill\fi\oprocendsymbol}

\newtheorem{problem}{Problem}

\title{\LARGE \bf
%Planning and Learning under risk and uncertainty using Prospect Theory 
Planning under non-rational perception of uncertain spatial costs
}

%\title{\LARGE \bf
%Motion planning under uncertainty via cumulative prospect theory
%}
%other options: 
% human-robot cooperative path planning via cumulative prospect theory
% 
\author{Aamodh Suresh and Sonia Mart{\'\i}nez% <-this % stops a space
  \thanks{A.~Suresh and S.~Mart{\'\i}nez are with the Department of
    Mechanical and Aerospace Engineering, University of California at
    San Diego, La Jolla, CA 92093, USA {\tt\small
      \{aasuresh,soniamd\}@eng.ucsd.edu}}%
  \thanks{We gratefully acknowledge support from  AFOSR grant
  	FA9550-18-1-0158 and DARPA (Lagrange) N660011824027. }  }

\begin{document}

\maketitle
\thispagestyle{empty}
\pagestyle{empty}

\begin{abstract}
	
  This work investigates the design of risk-perception-aware motion-planning
  strategies that incorporate non-rational perception of risks
  associated with uncertain spatial costs. Our proposed method employs
  the Cumulative Prospect Theory (CPT) to generate a perceived risk
  map over a given environment. CPT-like perceived risks and
  path-length metrics are then combined to define a cost function that
  is compliant with the requirements of asymptotic optimality of
  sampling-based motion planners (RRT*). The modeling power of CPT is
  illustrated in theory and in simulation, along with a comparison to
  other risk perception models like Conditional Value at Risk
  (CVaR). Theoretically, we define a notion of expressiveness for a
  risk perception model and show that CPT's is higher than that of
  CVaR and expected risk. We then show that this expressiveness
  translates to our path planning setting, where we observe that a
  planner equipped with CPT together with a simultaneous
  perturbation stochastic approximation (SPSA) method can better
  approximate arbitrary paths in an environment.  Additionally, we
  show in simulation that our planner captures a rich set of
  meaningful paths, representative of different risk perceptions in a
  custom environment. We then compare the performance of our planner
  with T-RRT* (a planner for continuous cost spaces) and Risk-RRT* (a
  risk-aware planner for dynamic human obstacles) through simulations
  in cluttered and dynamic environments respectively, showing the
  advantage of our proposed planner.
	  
\end{abstract}

%%%%%%%%%%%%%%%%%%%%%%%%%%%%%%%%%%%%%%%%%%%%%%%%%%%%%%%%%%%%%%%%%%%%%%%%%%%%%%%%
\section{INTRODUCTION}
\label{sec:introduction}
\paragraph*{Motivation}
\label{sec:motivation}

%\margin{about the ``hi-RRT*''. I suppose we need to explain the
%  acronym somewhere (without saying 'human intuitive'); we could say
%  'human influenced' (?) in the
%  abstract?  }

%  \margina{I changed it to CPT-RRT* to be safe, I don't want the
%    reviewers to pounce on us listening to the word ``human'' like
%    they did last time. Maybe in the future after user studies we can
%    consider renaming to Hi-RRT* :)  -> Good call}
%\margin{Can we really guarantee the optimality of the paths?}

Autonomous robots from industrial manipulators to robotic
swarms~\cite{HC-SH-KML-GK-WB-LEK-ST:05,CW-AMB-AM:18,AS-SM:20}, are
becoming less isolated and increasingly more interactive. 
%in today's
%world.
% These interactions, whose degree of complexity depends on the
% human's involvement in the robot's workspace, directly affect the
% robot path-following behavior. This is the case in the simplest
% scenarios where a decision maker (DM) may participate in the
% guidance of an autonomous system via shared or supervised control,
% such as in robotic surgery, search and rescue operations, or
% autonomous car driving. % , or indirectly as
% is part of the environment.  Path planning is one of the fundamental
% concepts associated with these autonomous robotic systems which is
% needed in order to traverse around their environment to perform
% their tasks.
Arguably, most environments where these robots operate, have an
associated spatial cost, which can lead to a robot's loss or
damage. For example, an oily surface can cause a robot to slip and
collide with a nearby obstacle, resulting in a crash.
In more complex scenarios, a decision maker (DM) may be directly
involved with the motion of an autonomous system, such as in robotic
surgery, search and rescue operations, or autonomous car
driving.  % This includes autonomous cars,
% robotic surgery, search and rescue, or social navigation.
% \color{blue} \margina{modified a bit here.}
The risk perceived from these costs or losses could vary among
different DMs. % In such cases, the decision making of an autonomous
% system w.r.t perceived losses may influence the confidence that a DM
% has on the robot.
This motivates the consideration of richer models that are inclusive
of non-rational perception of spatial costs in motion planning. With
this goal, we aim to study how Cumulative Prospect Theory
(CPT)~\cite{AT-DK:92} can be included into path planning, and compare
its paths with those obtained from
other risk perception models.
% a decision maker (DM) like a human is involved in the
% planning problem either directly through shared control, or
% indirectly as part of the environment. It Both the Psychophysics and
% Behavioral Economics communities have concluded that human DMs
% perceive physical sensation, risk and uncertainty in a fundamentally
% nonlinear manner~\cite{SSS:70,AT-DK:92}, leading to non-rational
% perception models. Empirical paradoxical observations (\color{red}
% cite Allais paradox)\color{black} motivated the latter studies
% resulting in Cumulative Prospect Theory (CPT), a model aimed at
% capturing this non-rational decision making characteristic.

% In the context of path planning, this non-rational decision making can lead
% to alternative plans from existing rational path planners. Motivated
% by this, we aim to study the formulation of motion planning problems
% to account for these qualitative features, while 
% comparing the result with that of rational notions of risk perception. 
%In this work, we study how to adapt CPT to reflect a DM's risk and uncertainty perception into a motion planning approach. Furthermore, these models, which
%depend on parameters, are adapted via a
%learning algorithm to obtain a close CPT model that can represent
%the attitude towards risk and uncertainty of a
%DM. % using the framework of RRT* and call it Hi 
%These methods rely on tuning a risk sensitivity parameter
%\margin{I've changed this paragraph back, I do like this more, but we
% can discuss it}
\color{black}

%-RRT* to
% generate suitable paths in this perceived environment. 
%\margin{complete to just pose the questions, but not state exactly
%what we do}
%\margin{Just saying this should be enough. I've rephrased this not to
% say these CPT models output directly the path of a DM}
\begin{figure}[h]
	\centering
	\begin{subfigure}[t]{0.49\linewidth}
		\includegraphics[width=\textwidth,height=1in]{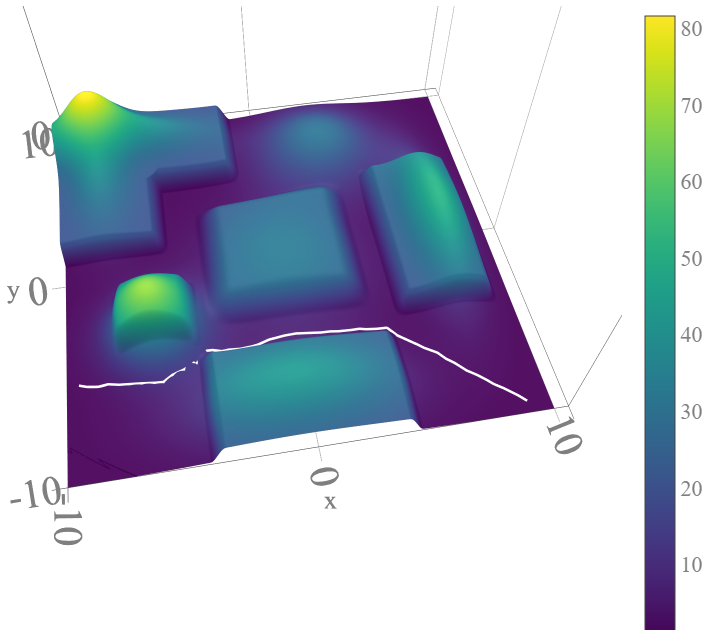}
		\caption{Motion plan with expected risk}
		\label{fig:intro_plan_er}
	\end{subfigure}%
	~
	\begin{subfigure}[t]{0.49\linewidth}
		\includegraphics[width=\textwidth,height=1in]{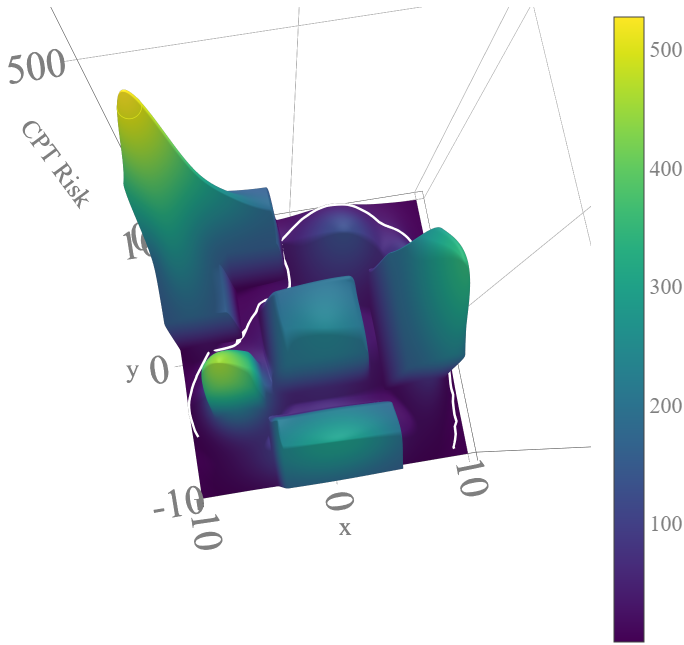}
		\caption{Motion plan with CPT risks}
		\label{fig:intro_plan_cpt}
	\end{subfigure}%
	
	\caption[caption]{Environment perception and sampling-based motion
		planning using a) Rational environment perception using expected risk, b) DM's
		Risk-Averse environment perception with the chosen path in white.}
	\label{fig:intro_plan}
\end{figure}
\paragraph*{Related Work}
\label{sec:related_work}

Traditional risk-aware 
%\margin{is this risk aware planning or
%  uncertainty-aware planning? please clarify except for the 'collision
%  time' others seem to confuse the two concepts}
%  \margina{The community uses "risk-aware" to convey all types of risk including uncertainties.}
path planning considers risk in the form of motion and
state uncertainty~\cite{HK-TB-NMP:12}, collision
time~\cite{JS-SG-TAW:19}, or sensing
uncertainty~\cite{BB-OB:07}. Chance constrained
approaches~\cite{BDL-SK-JPH:13,LB-MO-BCW:12} are used to handle
agent and environment uncertainty in a robust manner, however
discrete polyhedral obstacles are considered which cannot incorporate
continuous spatial costs. Stochastic dynamic programming~\cite{NEDT-JWB:12} in used in dynamic
environments to locally integrate planning and estimation
without optimality guarantees. Moreover, in all the above works, how
the risks and uncertainties are perceived or relatively weighted has
been overlooked. A few recent works~\cite{SS-YC-AM-MP:19}
contemplate %\margin{try to
%reduce the use of the verb 'consider' and the repetition of words}
risk perception models, but assume rational DMs and use coherent risk
measures like Conditional Value at Risk
(CVaR)~\cite{AH-GCK-IY:19}. Unlike CPT's suggestions, these measures
are built using certain axioms that assume rationality and linearity
of the DM's risk
perception~\cite{PA-FD-JME-DH:99}. % In any case, CPT generalizes risk measures such as expected
% risk and CVaR in the sense that we can recover these measures by an
% appropriate choice of CPT setup, catering to a more diverse set of
% DMs.
% \margin{any drawbacks of
%  this approach? is this for rational dm? is modeling limited?}
CPT has been extensively used in engineering applications
like traffic routing~\cite{SG-EF-MBA:10}, network
protection~\cite{ARH-SS:19}, stochastic
optimization~\cite{CJ-PLA-MF-SM-CS:18}, and safe shipping~\cite{LW-QL-TY:18} to model non-rational decision making.  However, CPT is yet to be applied in robotic
planning and control. 

Regarding planning algorithms themselves, RRT*~\cite{SK-EF:11} has
been the basis for many motion planners due to its asymptotic
optimality properties and its ability to solve complex
problems~\cite{BB-TH-SM:18-aa}. Risk~\cite{WC-MQM:17} and
uncertainty~\cite{BE-TS-SDB-AS:16} have been an ingredient of motion
planning problems involving a human, but have been mainly modeled in a
probabilistic manner~\cite{ES-KL-WV-MP:18} with discrete obstacles.
Very few of these works have considered modeling planning environments
via continuous cost maps~\cite{DD-TS-JC:16,TS-BE:15}, while, to the
best of our knowledge, the simultaneous treatment of cost and
uncertainty perception to model a DM's spatial risk profile has
largely been ignored.

\paragraph*{Contributions}
\label{sec:contributions}
%\margin{rewrite the contributions here}
%\margina{done}
%\color{blue}
Our contributions lie in three main areas:
%\begin{enumerate}
%\item \emph{CPT Environment Generator :-} We propose an adaptation of
%  CPT to generate a DM's non-rational perception of an environment which
%  is risky and uncertain.
%\item \emph{CPT Path Planner :-} We generate desirable paths using
%  sampling-based planning on the perceived continuous risky and
%  uncertain environment. We also prove asymptotic optimality of our
%  planner in this continuous risk space using pseudo metric costs,
%  which is different from the traditional RRT* having binary
%  collision checking and a metric cost.
%  %\margin{is this planner essentially different
%    %from RRT*? how is it different?}
%     %\margin{we need to explain the
%    %acronym I guess}
%\item \emph{CPT Parameter Learner :-} Given a DM's path generated by a
%  CPT-model in a known environment, we learn the CPT parameters that
%  characterize this path by implementing SPSA. 
%  %\margin{To be edited carefully}
%\end{enumerate}
%\margin{contributions have to be adapted to reflect theory aspect of
%  CPT planner}
Firstly, we adapt CPT into path planning to model non-rational
perception of spatial cost embedded in an environment. With this, we
can capture a larger variety of risk perception models, extending the
existing
literature.  %\margin{what does the adaptation do? we were more specific
%in the abstract}
Secondly, we generate desirable paths using a sampling-based
(RRT*-based) planning algorithm on the perceived risky
environment. Our planner integrates a continuous risk profile and path
length to calculate path cost, enabling us to plan in the perceived
environment setting above.
%\margin{fill in the
% gaps.} 
Furthermore, the chosen cost satisfies the sufficient conditions for
asymptotic optimality of the planner, leading to reliable and
consistent paths according to a specified risk profile.  We then compare our planner's
performance with T-RRT* (continuous cost space planner) and Risk-RRT*
(risk-aware planner) through simulations in cluttered and dynamic
environments respectively. We show that our proposed planner can
generate better paths in comparison.
%which is different from the traditional RRT*
%having binary collision checking. \margin{this sentence is
%confusing. What is different? the proof or the algorithm?}
Finally, we define the notion of ``expressiveness'' for a risk
perception model and show that CPT's is higher than that of CVaR and
expected risk. Furthermore using SPSA, we show that the expressiveness
hierarchy translates to our path planning setting, where we observe
that a planner equipped with CPT can better approximate arbitrary
paths in an environment. % \color{black} This can be an useful first
% step towards a larger learning scheme for capturing human path
% planning using frameworks like inverse reinforcement learning (which
% is currently out of scope for this work).
%  we don't do).
%\margin{To be edited carefully}
%\end{enumerate}
% Together with these three aspects, an autonomous agent will be able
% to adapt to a specific perception of a given environment using a CPT
% model, and be able to generate more cogent future paths using the
% planner.
\color{black} We clarify that here, we merely examine CPT based
environment perception models for motion planning and leave the
validation of these models with human user studies for future work.
%\margina{Included to the above sentence to clarify. ->
%  OK} % It can also be used to
% estimate a DM's trajectory in a shared environment to plan more
% acceptable paths in scenarios like social navigation.

%\section{Paper Organization}
%{\color{blue}
Fig.~\ref{fig:intro_plan} shows a preview of how a nonlinear DM's
perception of the environment influences the path produced to reach a
goal. While Fig.~\ref{fig:intro_plan_er} shows a rational perception
of the environment using expected risk,
Fig.~\ref{fig:intro_plan_cpt} illustrates a nonlinearly deformed and
scaled surface that reflects the perception of a certain DM using
CPT. 
%In particular, as discussed later, we observe a richer path
%behavior of CPT-based planners as compared with others.
% we can observe that the value of the perceived
% risk is much higher in case of Figure~\ref{fig:intro_plan_cpt} as
% compared to that of Figure~\ref{fig:intro_plan_er}.
%\margin{I would move this paragraph
% to after the contributions}
%\color{blue} \margina{I don't know if something along these lines is
%  needed } Also, we note that most planning algorithms which use
%graph-based or sampling based methods, associate a cost to a path and
%make discrete spatial choices in each iteration. In this work, we use
%CPT based costs instead to induce non-rational decision making when
%faced with risky choices in an RRT* planning framework. This
%non-rational decision making approach can be readily extended to other
%planning frameworks, which is left for future work.  \color{black}
%\margina{maybe not needed}
\color{black}
 
\section{Preliminaries}
\label{sec:prelims}
 Here, we describe some basic notations used in the paper
along with a concise description of Cumulative Prospect Theory. More
details about CPT can be found in~\cite{SD:16}.

\paragraph{Notation}
\label{sec:notation}
We let $\real$ denote the space of real numbers, $\integernonnegative$
the space of positive integers and $\realnonnegative$ the space of non
negative real numbers. Also, $\real^n$ and $\mathcal{C}\subset
\real^n$ denote the $n$-dimensional real vector space and the
configuration space used for planning. We use $ \| .\|$ for the
Euclidean norm and $\circ$ for the composition of two functions $f$
and $g$, that is $f(g(x))=f \circ g(x)$. We model a tree by an
directed graph $G =(V,E)$, where $V=\until{T}$ denotes the set of
sampled points (vertices of the graph), and $E \subset V \times V$,
denotes the set of edges of the graph.
% We use $\mathbb{P}$ to denote the set of $n$ dimensional polygonal
% shapes. We have the environment $\mathbb{S} \subset \mathbb{R}^2$
% which has the obstacle space $\mathbb{O} \subset \mathbb{S}$
% embedded in it. The free space for the swarm to move is then given
% by $\mathbb{F}:= \mathbb{S} \setminus \mathbb{O}$.

\paragraph{Cost and uncertainty weighting using CPT}
\label{sec:PT}

%Cumulative Prospect theory \cite{AT-DK:92} is a Nobel prize winning theory, which
%tries to model human decision making under risk and
%uncertainty. 
%\margin{add citation}
CPT is a non-rational decision making model which incorporates non-linear perception of uncertain costs.
Traditionally it has been used in scenarios of monetary
outcomes such as lotteries \cite{AT-DK:92} and the stock market
\cite{SD:16}.

% In this section, we adapt it for path planning applications.

% Let $x \in \real$ be the wealth/risk variable. Let $x_r \in \real $
% be the reference wealth/risk, then the gain or loss $y \in \real$
% associated with being in state $x$ is given by $y=x-x_r$.
%\margin{I'm eliminating the ref to planning here. We are just
%  summarizing CPT, and will adapt it later to planning.}
%\margina{okay}
Let us suppose a DM is presented with a set of prospects
$\{\rho^1,\dots,\rho^k,\dots,\rho^K \}$, % , with
% $\mathcal{L}^k$ given by the sequence $\mathcal{L}^k = \{(x^k,
% \rho_i^k, p_i^k) \}_{i=1}^{M} $,
representing potential  outcomes
and their probabilities, $\rho^k= \{(
 \rho^k_i, p^k_i) \}_{i=1}^{M} $.  % associated with the prospect location $x^k \in
%\mathcal{C}$. 
 More precisely, there are $M$ possible outcomes associated with a
 prospect $k$, given by $\rho_i^k \in \realnonnegative$, for $i \in
 \until{M}$, which can happen with a probability $p_i^k$. %\margin{I've
 % rephrased this, read to see that makes sense}
%\margin{yes it }
%are associated each with a set of locations $\{x_i^k\}$ in the
%configuration space $\mathcal{C}$, from which the DM chooses the one
%which minimize their expected losses. \margin{do you mean each
%  $\mathcal{L}^k = \{x_k\}$?  I'd say $\mathcal{L}^k = \{(x_i^k,
%  y_i^k, p_i^k) \}_{i=1}^{M_k} $} Each of these prospects
%$\mathcal{L}^k$ also contain outcomes or risk $y_i^k \in
%\realnonnegative$ associated with a location $x_i^k$ in the
%environment and their corresponding probabilities $p_i^k$, where $i
%\in \{1,\ldots,M_k\}$. 
 The outcomes are arranged in a decreasing order denoted by
 $\rho_{M}^k<\rho_{M-1}^k<...<\rho_{1}^k$ with their corresponding
 probabilities, which satisfy $\sum_{i=1}^{M}p_i^k=1$. The outcomes of
 prospect $k$ may be interpreted as the random cost\footnote{CPT has an alternate perception model for random rewards~\cite{SD:16}, which is not used here since we are interested in cost perception. } of choosing
 prospect $k$.
%The
%expected loss for choosing a prospective position $\mathcal{L}^k$ is
%\begin{equation}
%  E(\mathcal{L}^k)=\sum_{i=1}^{M} \rho_i^kp_i^k .
%\end{equation}
%If the DM is completely rational and objective, he/she would choose
%the $k^{th}$ prospect which would minimize the expected risk. If $c$
%represents the human choice then
%\begin{equation}
%  c= \argmin_kE(\mathcal{L}^k). 
%\end{equation}
%\margina{I think we can remove the expected loss from here as we again define expected risk in Section IV, let's keep this section is about the non linear perception of risk and uncertainty using CPT?}
%As pointed out in Section~\ref{sec:introduction}, there is significant
%empirical evidence which suggests that humans in general have a skewed
%notion of not only the risks but also the probabilities associated with the outcomes.

We define a utility function, $v: \realnonnegative \rightarrow \realnonnegative$
%\margin{I prefer you define domain and co-domain of $v$} 
modeling a DM's perceived cost and $w :[0,1]
\rightarrow [0,1]$ as the probability weighting function which
represents the DM's perceived uncertainty.  
% \margin{explain
%  concisely here how CPT uses this to define a risk function
%  associated with outcomes. Explain the logic behind CPT theory, how
%  it captures ``aversion to risk'', ``risk sensitivity'' and
%  ``uncertainty perception'' for a non-rational decision maker. Match
%  the wordings to what comes later, where we distinguish cost from
%  risk. }
 While previous literature have used various forms for these
functions, here we will focus on CPT utility function $v$ taking the
form:
\begin{equation}
  v(\rho)= \lambda\cdot \rho^\gamma,
  \label{eqn:cpt_utilityfn}
\end{equation}
where $0<\gamma<1$ and $\lambda>1$. Tversky and
Kahnemann~\cite{AT-DK:92} suggest the use of $\gamma=0.88$ and
$\lambda=2.25$ to parametrize an average human in the scenario of
monetary lotteries, however this may not hold for our application
scenario. The parameter $\lambda$ represents the coefficient of cost
aversion with greater values implying stronger aversion indicative of
higher perceived costs, as indicated in
Figure~\ref{fig:pre_lambda}. The parameter $\gamma$ represents the
coefficient of cost sensitivity with lower values implying greater
indifference towards cost $\rho$ which is indicated in
Figure~\ref{fig:pre_gamma}.

\begin{figure*}
    \begin{subfigure}[t]{0.24\linewidth}
        \includegraphics[width=\textwidth]{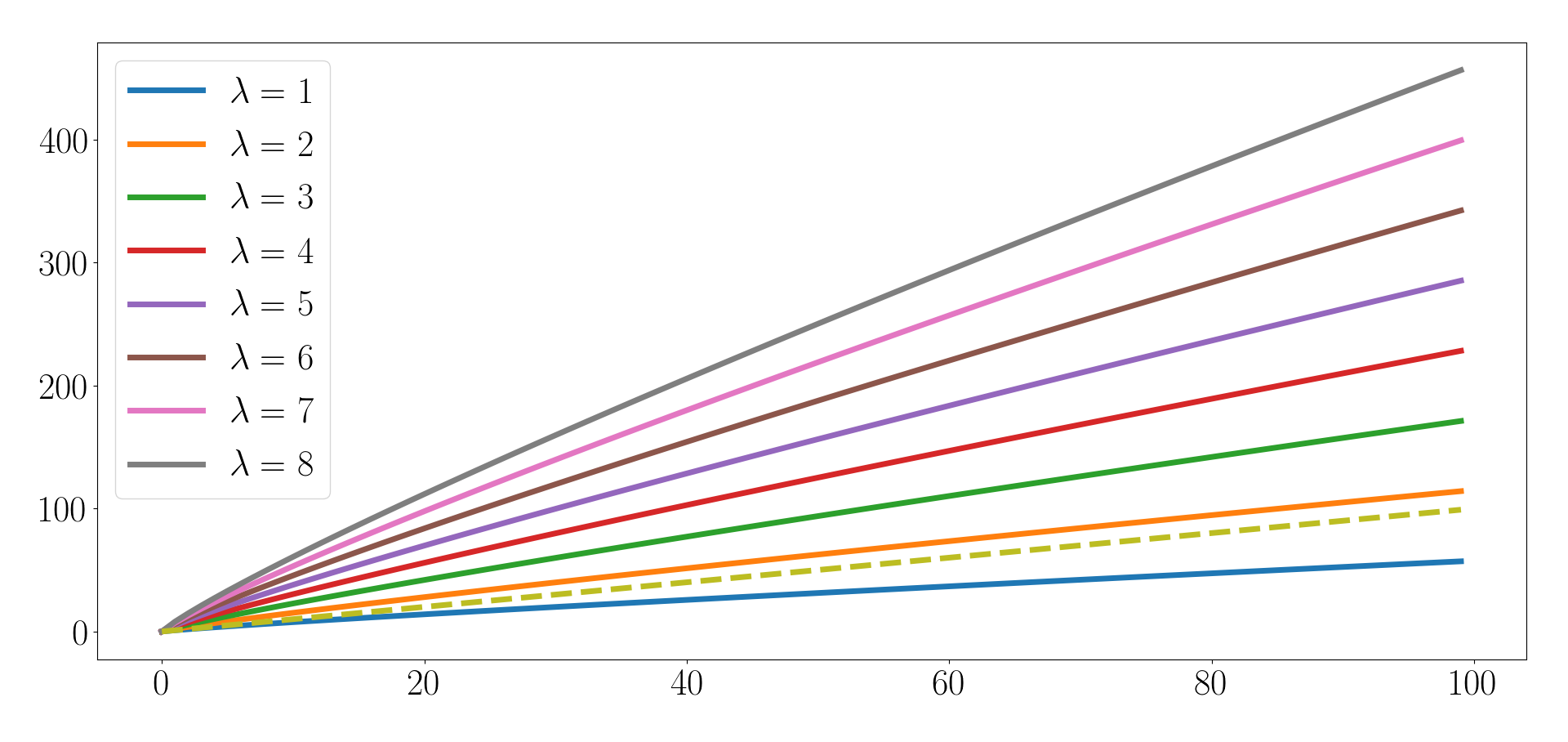}
        \caption{Change in risk aversion $\lambda$ with $\gamma=0.88$ }
        \label{fig:pre_lambda}
    \end{subfigure}%
    ~ 
    \begin{subfigure}[t]{0.24\linewidth}
        \includegraphics[width=\textwidth]{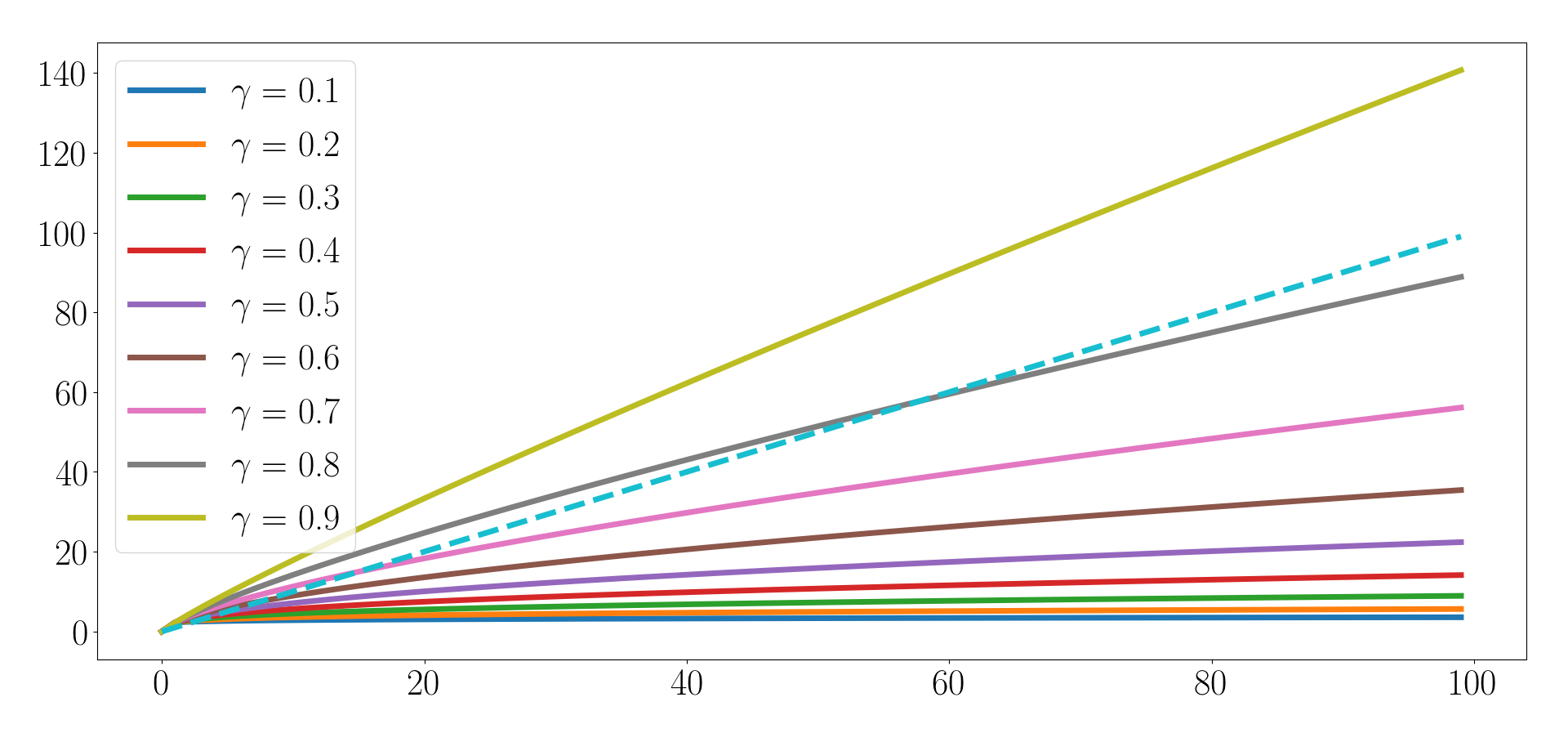}
        \centering
        \caption{Change in risk sensitivity $\gamma$ with $\lambda=2.25$}
        \label{fig:pre_gamma}
    \end{subfigure}
    \begin{subfigure}[t]{0.24\linewidth}
        \includegraphics[width=\textwidth]{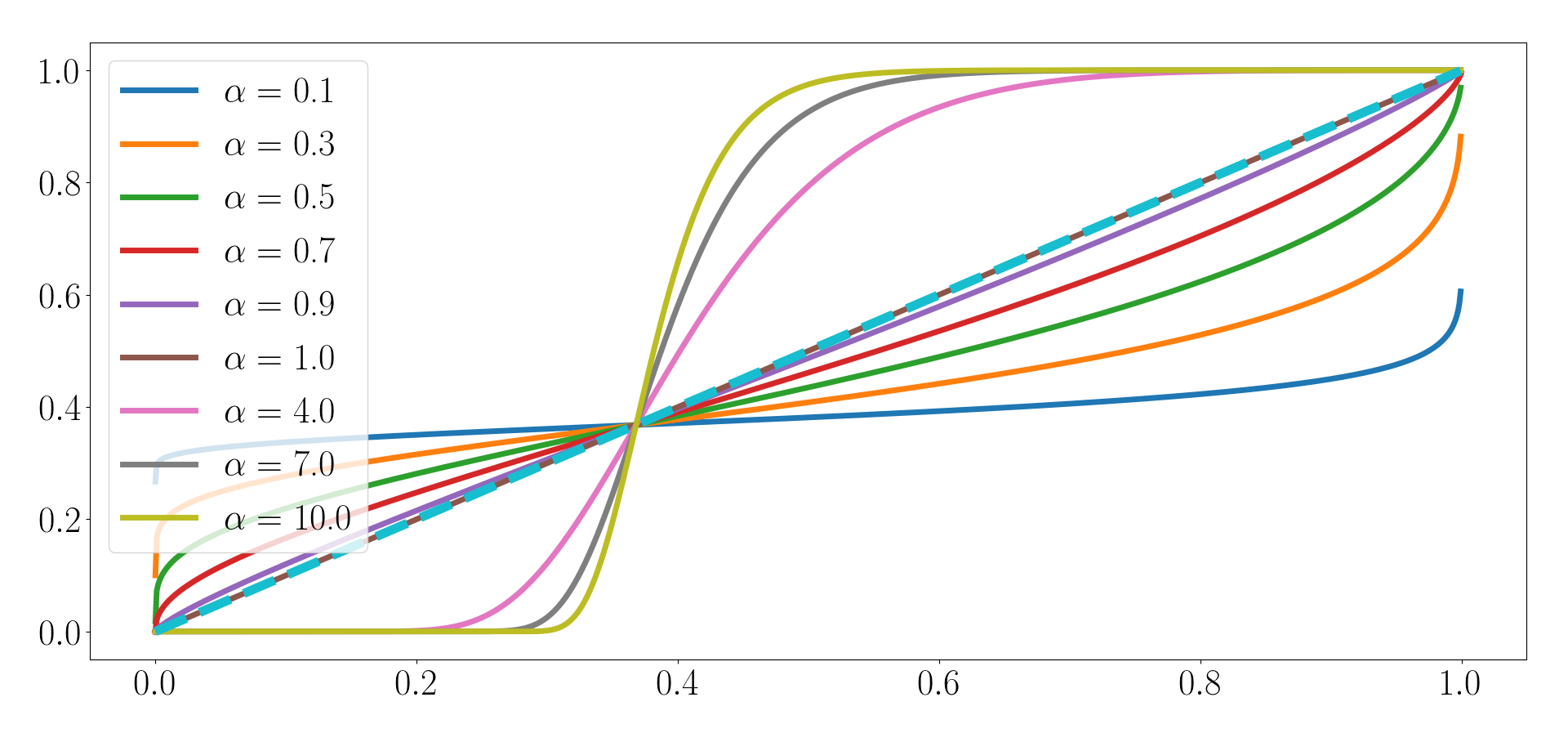}
        \centering
        \caption{Change in $\alpha$ with $\beta=1$}
        \label{fig:pre_alpha}
    \end{subfigure}
    ~
    \begin{subfigure}[t]{0.24\linewidth}
        \includegraphics[width=\textwidth]{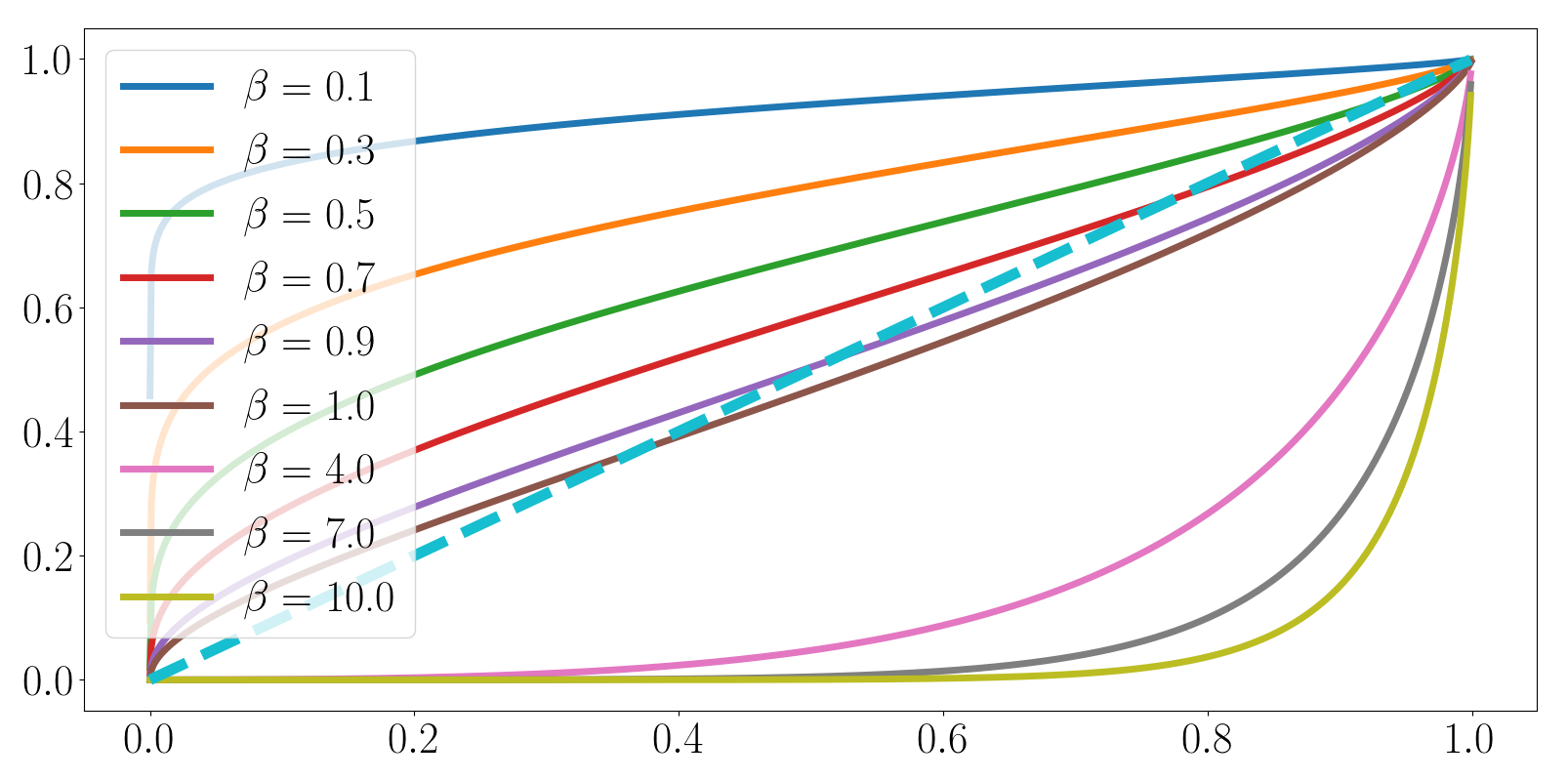}
        \centering
        \caption{Change in $\beta$ with $\alpha=0.74$}
        \label{fig:pre_beta}
    \end{subfigure}
    \caption[Caption]{Variation of risk aversion, risk sensitivity and
      uncertainty perception using CPT. (a)-(b) show risk perception
      with x-axis indicating the associated risk, $\rho$, and the
      y-axis showing the perceived risk, $v$. The dotted line
      indicates the line $v=\rho$. (c)-(d) show uncertainty perception
      with x-axis indicating probabilities $p$ and y-axis showing
      their perception $w$, with the dotted line depicting $w=p$ }
    \label{fig:pre_utility_perception}
\end{figure*}

We will be using the popular Prelec's probability weighting
function~\cite{SD:16,DP:98} indicative of perceived uncertainty, which
takes the form:
\begin{equation}
w(p)=e^{-\beta(-\log p)^{\alpha}},
\alpha>0, \beta>0, w(0)=0.
\label{eqn:cpt_prelec}
\end{equation}
Figures~\ref{fig:pre_alpha}~and~\ref{fig:pre_beta} show changing
uncertainty perception resulting from varying $\alpha$ and $\beta$
respectively. By choosing low $\alpha$ and $\beta$ values, one can get
``uncertainty averse'' behavior with $w(p)>p$, implying that unlikely
outcomes are perceived to be more certain, as seen on
Figures~\ref{fig:pre_alpha}~and~\ref{fig:pre_beta}.  When $w(p)<p$,
``uncertainty insensitive'' behavior is obtained implying that the DM
only considers more certain outcomes, which can be observed with high
$\alpha$ and $\beta$ values.

These concepts illustrate the nonlinear perception of
cost and uncertainty, a DM under consideration can be categorized
by the parameters $\Theta=\{\alpha,\beta,\gamma,\lambda \}$. Using the non-linear parametric perception functions $v$ and $w$, CPT calculates a value function $R^c(\rho)$, indicating the perceived risk value of the prospect $\rho$. This calculation is detailed in Section~\ref{sec:CPT_environment} using our planning setting.

% We will
%be using these to construct a DM's perception of an environment with spatial uncertain costs, which will be detailed in
%Section~\ref{sec:CPT_environment}.

\section{Environment Setup and Problem Statement}

\label{sec:environment_setup}

In this work, we consider spatial sources of risk
embedded in the $\mathcal{C}$ space. Our starting
point is an uncertain cost function $\rho(x)$ that aims to
quantify objectively the (negative) consequences of being at a location $x$ or adopting a certain
decision under uncertainty at a point $x$.

%\margina{I think we should also say that the risk $\rho(x)$ captures
%  the cost of simply being at a location $x$ and add a small example
%  of this like in previous draft.}  \margin{what example would you
%  provide different from the ones below? the fire one? OK, I've edited
%it}
%\margina{yes the fire one, okay looks good.}

% a way, $Y(x)$ is a metric of rational risk.

For example, suppose a robot moves to a location $x$ from $x'$ where
there is an obstacle with certain probability. Then, we can define a
cost measurement as the possible damage to the robot by moving from
$x'$ to $x$ under action $a'$ applied at $x'$. A cost value $\rho(x',x,
a')$ can be defined depending on i) the type of robot (flexible robot
or rigid robot), ii) the probability of having an obstacle in the said
location, and iii) the type of action applied at $x'$ to get to $x$
(e.g.~slow/fast velocity). For simplicity, adopting a worst-case
scenario, we may reduce the previous cost function to a function of
the state $\rho(x) \equiv \max_{x', a'} \rho(x',x,a')$\footnote{Instead of a
  $\max$ operation, one may use an expected operation wrt $x',a'$.}.

%\margin{let's leave this paragraph for the extended version online, we
%  can indicate we have crossed it out. In fact, this whole explanation
%  is a candidate to be removed, but it somehow justifies our bump
%  function later.} 

 As another example, consider a drone
  navigating in a building which is ablaze. In this case, the cost
  function can be proportional to the temperature profile. As sensors
  are noisy, the temperature profile is uncertain, resulting into a
  noisy spatial cost value $\rho(x)$.  Similarly, environmental
conditions that affect the robot's motion may lead to
underperformance. When moving over an icy road, the dynamics of the
robot may behave unpredictably, resulting in a temporary loss of
control and departure from an intended goal state. In this case, the
uncertain cost may be quantified as the state disturbance under a
given action over a given period of time.  For example, for a simple
second-order and fully actuated vehicle dynamics with acceleration
input $a$ which is subject to a locally constant ``ice'' disturbance
$d(x) \approx d$, in a small neighborhood of $x$, we have $(x'-x) = (a
+ d(x))\Delta t^2/2$ for a small time $\Delta t$. Thus, the difference
with an intended state can be measured by the random variable $\rho(x)
= \frac{\|d(x)\| \Delta t^2}{2}$, which encodes information about
$d(x)$ and the unit time of actuation, $\Delta t$. Here $d(x)$ is
uncertain, and can be modeled with prior data or measured with a noisy
sensor as in the temperature profile case.

Prior knowledge in the form of expert inputs and data collected from
sensors can be used to get information about the cost $\rho(x)$,
environmental uncertainty, and the robot's capabilities.
 In this way, icy roads pose much lesser cost in the
  previous sense to a 4WD car with snow tires than a 2WD car with
  summer tires, hence the same cost at a given location could be
  scaled differently depending upon the robot's
  capabilities.
%  \margin{I'd leave the blue paragraph for the online
%  version} 
%\margina{yes this version will be the extended online version}

 In this work, we will assume that the cost at a location $x \in
\mathcal{C}$ has been characterized as a random variable $\rho(x)$
with a mean $\rho_\mu(x) \in \realnonnegative$ and standard deviation
$\rho_\sigma(x) \in \realnonnegative$, for each $x \in
\mathcal{C}$. 
{\color{black} We remark that this cost
	can be constructed from diverse criteria: From nature of
	location (For eg. operating table at hospital being mostly static but highly risky) to dynamic properties (For eg. velocity, state) of close-by
	obstacles.} 
%That is, $\rho$ is a scaled version of a spatial
%Gaussian random variable, which constitutes a large class of spatial
%random variables.
{\color{black} In particular, it is reasonable to approximate
  $\rho_\mu(x)$ via a ``bump function,'' a concept extensively used in
  differential geometry. To fix ideas, consider the
  previous case where a vehicle moves through an ``icy'' environment,
  and assume $\Delta t = 1$. Then, a mean disturbance over a subset $A
  \subseteq \mathcal{C}$ should result approximately into a
  disturbance $\frac{\|d\|}{2} \ell = \supscr{\rho}{max} \ell$, where
  $\ell$ is the portion of the trajectory from $x$ to $x'$ that is
  inside the icy section $A$. As $x$ is farther from $x'$, the
  disturbance reduces its effect on $x$, and the value of $\ell$
  should decrease to zero. In other words, there is a $B$ such that
  $A\subseteq B$, where $B$ is an enlarged region whose boundary
  delimits the uncertain cost area from the certain one (i.e.~outside
  $B$ the cost is zero with low uncertainty). The effect of $\ell$ is
  thus similar to that of a bump function defined with respect to $A$
  and $B$. Bump functions are infinitely smooth, take a positive
  constant value over $A$, which smoothly decreases and becomes zero
  outside $B$. There are many ways of defining bump functions
   on manifolds, such as
  via convolutions, which works in arbitrary dimensions as described
  the following. %\footnote{Bump functions can also be defined in
  % arbitrary Riemannian manifolds.}.. 
  Let $\chi_D:\mathcal{C} \rightarrow \real$ denote the indicator
  function of a subset $D \subseteq \mathcal{C}$, and, given $A$,
  define $f(x) = \frac{\supscr{\rho}{max}}{C} \exp (-\frac{1}{1-
    \|x\|^2}) \chi_A(x)$, with $\int_A \exp(-\frac{1}{1- \|x\|^2}) =
  C$. Then, a bump function based on $A$ and $B$ can be given by the
  convolution $b(x) = \chi_B \star f(x)$, $x \in \mathcal{C}$. This
  function takes a value of $0$ outside $B$, $\supscr{\rho}{max}$
  inside $A$ and a value between $0$ and $\supscr{\rho}{max}$ at the
  points $x \in B \setminus A$.  See Section~\ref{sec:results} for an alternative choice of bump
    function.
  
%   \margin{a justification of
%  bump functions is given here, I hope it works} 

  \color{black} In this work, the notions of ``risk'' and ``risk
  perception'' relate to the way in which the values of $\rho(x)$ are
  scaled and averaged in expectation. That is, risk is a moment of a
  given uncertain function (either $\rho(x)$ or a composition with
  $\rho$). For example, the risk of being at a location $x$ can be
  measured via expected cost; that is $R^e(x)= \mathbb{E}(\rho(x))$,
  which may represent ``expected damage to robot'' with respect to
  uncertainty. However, there are other ways of weighting the
  $\rho(x)$ outcomes to define alternative risk functions, such as
  using CPT.  With this is mind, we proceed to define the following
  three main problems and address them in
    Section~\ref{sec:CPT_environment}, Section~\ref{sec:planning} and
    Section~\ref{sec:learning_SPSA}, respectively.

%\margin{Now we have to modify these problems} 

\color{black}
\begin{problem}\longthmtitle{CPT environment generator}
\label{prob:CPT_environment}
%\margina{if we use $\rho$ for cost then maybe we can use $R(\rho)$ to
 % express perceived risk. }
%\margin{Isn't the CPT risk the Perceived risk? We have a notation of
  %$R^c(\rho)$ for this, isn't that the one you mean?}
%\margina{yes that's what i mean, i'll change it accordingly}
Given the configuration space $\mathcal{C}$ containing the uncertain
cost $\rho$ along with the DM's CPT parameters $\Theta$, obtain a DM's
(non-rational) perceived risk $R^c$ consistent with CPT theory.
\end{problem}

\begin{problem}\longthmtitle{Planning with perceived risk} 
%\margin{we can say ``CPT planner evaluation to capture risk versus
 % alternative metrics''}
\label{prob:path_planner}
Given a start and goal points $x_s$ and $x_g$, compute a desirable
path $P$ from $x_s$ to $x_g$ in accordance with the DM's perceived
risk $R^c$. 
\end{problem}
\begin{problem}\longthmtitle{CPT planner evaluation}
 % \margin{``Evaluation of CPT planner as a model approximator''}
\label{prob:CPT_learner}
Given a configuration space $\mathcal{C}$, and an uncertain cost $\rho$ along with a drawn path
$P_d$, evaluate the CPT planner as a model approximator to generate
the perceived risk $R^c$
%\margin{again,
 % I think you mean $R^c$} 
 representing the path $P_d$.
\end{problem}
% \color{black} Now we will proceed to solve the above problems in
% Section~\ref{sec:CPT_environment}, Section~\ref{sec:planning} and
% Section~\ref{sec:learning_SPSA} respectively in order.
% Problem~\ref{ \margin{refer where
%  is the solution to each problem in the following sections}

\section{Risk perception using~CPT}

%\margina{Change this section according to previous definitions}
%\margin{Make this consistent with the previous section. I've put what
%  I believe is correct in blue}
\label{sec:CPT_environment} Here, we will generate a
DM's perceived risk and address Problem~1. We consider
an uncertain cost $\rho(x)$ is given at every point $x \in
\mathcal{C}$, which we approximate via its first two moments, a mean
value $\rho_\mu(x) \in \realnonnegative$ and a standard deviation
$\rho_\sigma(x) \in \realnonnegative$.  In what follows, we use a
discrete approximation\footnote{The discretization of the random cost
  function is used to be able to use CPT directly with discrete random
  variables. However, it is possible to generalize what follows to the
  continuous random variable case.} of $\rho(x)$ by considering $M \in
\integernonnegative$ bins,
% \margin{why do the bins have to be equally
%  spaced? 
 % I don't think they need to be so}
to obtain a set of possible cost values $\rho(x)\triangleq
\{\rho_1(x),\dots, \rho_M(x) \}$ such that
$\rho_{M}(x)<\rho_{M-1}(x)<...<\rho_{1}(x)$ with their corresponding
probabilities $p(x) \triangleq \{p_1(x),\dots, p_M(x) \}$, such that
$\sum_{i=1}^{M}p_i(x)=1\ \forall x \in \mathcal{C}$. Further, we will
assume that $p_i(x_1)=p_i(x_2)\equiv p_i,\ \ \forall x_1,\ x_2 \in
\mathcal{C}\ , \mbox{and}\ i\in \{1,...,M \}$. In other words, even
though cost values $\rho_i(x)$, $i \in \until{M}$, may change from
point to point in $\mathcal{C}$, the probabilities $p_i(x)$ remain the
same for different $x$. Note that we can do this wlog by discretizing
the continuous RV appropriately, see Algorithm~\ref{alg:CPT_env}. The
function $discretize$ finds $y_i(x)<y_{i+1}(x)$ such that
$\mathbb{P}[y_{i}(x) \le \rho(x) \le y_{i+1}(x)] = p_{i+1}(x) -
p_i(x)$.
%{\color{black} (ADD this info in
%  algorithm: For any random variable, $\rho(x)$, we can choose the
%  $p_i(x)$ a priori, then find $y_i(x)<y_{i+1}(x)$ so that
%  $\mathbb{P}[y_{i}(x) \le \rho(x) \le y_{i+1}(x)] = p_{i+1}(x) -
%  p_i(x)$ and take $\rho_i(x) \triangleq \frac{y_{i+1}(x) -
%    y_i(x)}{2}$.)}\margin{I've added more detail to this part, this
%  can go to the algorithm CPT env generator}
% We note that for any bin size $M$,\eqref{eqn:risk_discretization}, ,
% the width of each bin is the same corresponding probabilities $p(x)$
% will be same at every point $x$, that is $p_i(x_1)=p_i(x_2),\ \
% \forall x_1,\ x_2 \in \mathcal{C}\ \mbox{and}\ i\in \{1,...,M \}$.
% \margin{I would just say we do this
% for simplicity, I'm not sure of the justification we give.}
% \margina{I changed this to be more informative. -> OK}

Now, the expected Risk $R^e(x)$ at a point $x$ is
%\begin{equation}
%R(x)=\frac{1}{M}\sum_{i=1}^M \rho_i(x) p_i(x).
%\label{eqn:exp_risk}
%\end{equation}
%\margina{I don't think $\frac{1}{M}$ is there as the probabilities sum to 1 }

{\small 
\begin{equation}
R^e(x)\triangleq\sum_{i=1}^M \rho_i(x) p_i(x).
\label{eqn:exp_risk}
\end{equation}
} That is, from~\eqref{eqn:exp_risk} we have an expected risk~$R^e :
\mathcal{C} \rightarrow \realnonnegative$
%\margin{I'm
%  changing the notation to $R^e$ for ``expected''. } 
defined over $\mathcal{C}$
%\margin{don't use mapsto
  %for arrows in functions, please just $\rightarrow$}
which is shown in Figure~\ref{fig:intro_plan_er} and corresponds to a
standard or rational notion of risk.
% We see that the expected risk is very similar to the mean risk
% envelope shown in Figure~\ref{fig:intro_mean_risk}. \margin{recall
% how is the mean risk envelope defined. why should we expect a
% difference or a similarity?}  \margina{Yes, true, it is only similar
% here because of small magnitude of uncertainty, I removed the
% sentence. -> OK}

Next, we use the CPT notions developed in Section~\ref{sec:prelims}
%\margin{the
%  label is weird}
to provide a  non-rational perception model of the cost
$\rho(x)$. 
%\margin{everytime we refer to $\rho$ it should be as
%  ``cost'', not ``risk''}% Given a set of $K$ points
% $\{x^1,\ldots,x^K\}$ with $x^k \in \mathcal{C}$, the set of prospects
% can be constructed as $\mathcal{L}^k := \{x^k
% ,\rho_i(x^k),p_i(x^k)\}$, where $i \in \{1,\ldots,M\}$ and $k \in
% \{1,\ldots,K\}$.
% So the DM has to pick a point $x(k)$ which poses the least expected
% risk.
% We have the utility function $v$ from \eqref{eqn:cpt_utilityfn},
% which represents human perceived risk
% %\margin{considerwriting domain/codomain of $v$}
%    and the probability weighing function $w$ from 
% \eqref{eqn:cpt_prelec}, representing human
% perceived uncertainty. 
According to CPT~\cite{AT-DK:92}, 
%\margin{shouldn't we just refer to
%  section II?}
%  \margina{There we just talk about risk and uncertainty perception, here we tell how to calculate CPT value.}
there is a notion of cumulative functions $\Pi :=
\{\pi_{1},...,\pi_{M}\}$ used to non-rationally modify the
  perception of the probabilities $p_i(x)$ in a cumulative
fashion. Defining a partial sum function $S_j(p_1,\dots,p_M)
\triangleq \sum_{i=j}^M p_j$ we have
\begin{equation}
\pi_j = w\circ
S_j(p_1,\dots,p_M) \\ 
- w \circ S_{j+1}(p_1,\dots,p_M), 
\label{eqn:cpt_decision_weights}
\end{equation}
where we employ the weighting function $w$ from~\eqref{eqn:cpt_prelec}.

With this, a DM's CPT risk $R^c:
\mathcal{C} \rightarrow \realnonnegative$ 
%\margin{also changed the
%  notation} \margin{we use the word ``envelope'' in many places. We
%  have risk envelope, which seems to refer to the mean value of
%  $\rho(x)$, and we have uncertainty envelope, which I don't know what
%  it means anymore. Is it the uncertainty about $\rho_\mu(x)$? I'd
%  like to remove that word, or use it very precisely. }
  % \margina{I will remove the word envelope as it is confusing. We will
  %   refer to first moment of cost instead of mean risk envelope,
  %   second moment of cost instead of uncertainty envelope and just
  %   plain risk instead of risk envelope. I hope that makes better
  %   sense. --yes, that makes sense}
%\margin{recall instead of using $\mapsto$, use
 % $\rightarrow$}
 associated to the configuration space is given by:
%\begin{equation}
%  R(x^k)=\frac{1}{M}\sum_{j=1}^M (v \circ \rho_j(x^k))( \pi_j \circ p(x^k)).
%  \label{eqn:cpt_risk}
%\end{equation}

{\small \begin{equation}
  R^c(x)\triangleq\sum_{j=1}^M (v \circ \rho_j(x))( \pi_j \circ p(x)).
  \label{eqn:cpt_risk}
\end{equation}}
%\margin{what is $\nu$? refer to (1)}
%\margina{thats a v}
We note that both functions $R^e$ and $R^c$ are differentiable, which
is important for the good behavior of the planner and which will be used for
the analysis in Section~\ref{sec:planning}.

% \margina{Changed the algorithm a bit to reflect just the calculation
%   of the CPT value instead of calculating CPT value for the entire
%   $\mathcal{C}$}
%\margin{don't forget to mention what is the function 'discretize'}

% \margin{there is no line describing how to go from the continuous RV
%   to the discrete one. Add what I have in parenthesis before line
%   3. Shouldn't $\alpha$, $\gamma$, $\lambda$, and $\beta$ be part of
%   the input list?, what is $\Theta$?}

\begin{algorithm}
\SetAlgoLined
\small 
Input: $\rho_\mu(x),\ \rho_\sigma(x),\ \Theta,\ \{p_1,...,p_M\}$ \\ 
Output : $R^c(x)$ \\ 
% initialization:$M,i$; \\
 %\For{$x \in \mathcal{C}$}{
 %$(\rho(x),p(x)) \leftarrow \mathcal{N}(\rho_\mu(x),\rho_\sigma(x)^2,M)$ \;
 \For{$i \in \{1,...,M\}$}{ 
 $y_i(x),y_{i+1}(x) \leftarrow discretize(p_i(x),p_{i+1}(x))$\;
 $\rho_i(x) \leftarrow \frac{y_{i+1}(x)-y_{i}(x)}{2}$\;
 }
 $w \circ p(x) \leftarrow e^{-\beta(-\log \circ p(x))^{\alpha}}$ \;
 $v \circ \rho(x)\leftarrow \lambda(\rho(x))^\gamma$ \;
 \For{$j \in \{1,...,M\}$}{
 $\pi_j \leftarrow w\circ
S_j(p_1,\dots,p_M)
- w \circ S_{j+1}(p_1,\dots,p_M)$ \;
 }
  $R^c(x) \leftarrow \sum_{j=1}^M (v \circ \rho_j(x))( \pi_j \circ p(x))$ \;
 %$R(x) \leftarrow \sum_{i=1}^M \rho_i(x) p_i(x)$ \;
 
 %}
 %$\mathcal{C}_{CPT} \leftarrow \{\mathcal{C},R(x),R(x)\}$
 \caption{CPT Environment (CPT-Env)}\label{alg:CPT_env}
\end{algorithm}

\color{black}
%\margin{In my version there is a line 13 that is empty why is that?}

% \color{black}
% \margina{does this look okay? ---yes}
%\margin{OK now}
The above concepts are illustrated in
% Figure~\ref{fig:intro_env_perception} and
Figure~\ref{fig:intro_plan}. Given an uncertain spatial cost $\rho$
with the first moment $\rho_\mu$ (Figure~\ref{fig:intro_mean_risk})
and second moment $\rho_\sigma$ (Figure~\ref{fig:intro_variance})
across an environment, the DM's perception can vary from being
rational (i.e.~using expected risk $R^e$ in
Figure~\ref{fig:intro_plan_er}) to non-rational (i.e using CPT risk
$R^c$). By varying $\Theta$, CPT risk $R^c$ can be tuned to represent
risk averse (Figure~\ref{fig:intro_plan_cpt}), risk indifferent
(Figure~\ref{fig:intro_CPT_risk2}) perception, as well as uncertainty
indifferent (Figure~\ref{fig:intro_CPT_unc1}) to uncertainty averse
(Figure~\ref{fig:intro_CPT_unc2}) perception.

This process gives us the 
%new configuration space $\mathcal{C}_{CPT}$
%which is given by $\mathcal{C}_{CPT} =
%\{\mathcal{C},R(x),R(x)\}$.
CPT perceived risk at a point $x$, the process is summarized in
Algorithm~\ref{alg:CPT_env}. It can be seen that
Algorithm~\ref{alg:CPT_env} does not depend on the dimensionality of
the $\mathcal{C}$ space, but on the discretization factor $M$. We will now use the perceived environment for planning in the next section.  \color{black}
%The new space contains the human expected
%risk envelope $R(x)$ and the expected risk envelope $R(x)$ for each
%point $x \in \mathcal{C}$, which is used for planning in the following
%section.

\section{Sampling-based Planning using perceived risk}
\label{sec:planning}
Here, we will use CPT notions to derive new cost functions, which will
be used for planning in the DM's perceived environment generated in
Section~\ref{sec:CPT_environment}.  In traditional RRT* optimal
planning is achieved using path length as the metric. In our setting,
the notion of path length is insufficient as it does not capture the
risk in $\mathcal{C}$. Thus, we define cost functions that a) take
into account risk and path length of a path, and b) satisfy the
requirements that guarantee the asymptotic performance of an
RRT*-based planner.

% We are interested in
% finding the cost associated with a path $P$ which takes into account
% risk, uncertainty as well as the path length.
\paragraph{Path costs functions}
Let two points $x,y \in \mathcal{C}$ be arbitrarily close. A decrease
in risk is a desirable trait, hence it is reasonable to add an
additional term in the cost only if $R(y)-R(x)\ge 0$, which indicates
an increase in DM's perceived risk by traveling from $x$ to $y$.
Consider the set of parameterized paths $\mathcal{P}(\mathcal{C})
\triangleq \setdef{\eta:[0,1] \rightarrow \mathcal{C}}{\eta(0) = x, \,
  \eta(1) = y}$. First, we first define the cost
$J^c:\mathcal{P}(\mathcal{C}) \rightarrow \realnonnegative $
%\margin{let's use big J for cost, then we have $J^c$ and $J^e$}
%$c: \mathcal{C}\times\mathcal{C}
%\rightarrow \realnonnegative$
%\margin{1) use domain and codomain of $c$, and
 % (2), we don't use the notation $\real^+$ in this paper. Be
  %consistent with notations.}
of a path $\eta \in \mathcal{P}(\mathcal{C})$. Consider a
discretization of $[0,1]$ given by $\{0, t_1,t_2,\dots, t_L=1\}$ with
$t_{ \ell+1} - t_\ell = \Delta t$, for all $\ell$. Then,
%\begin{equation} 
%  c(x,y)= \max \{0,R(y)-R(x)\} + \delta L(x,y),
%\label{eqn:cpt_costs}
%\end{equation} 
a discrete approximation of the cost over $\eta$ should be:
%\begin{align*} 
%  c(\eta) & \approx \sum_{\ell=1}^L \max \{0,R(t_{\ell+1}) -
%  R(t_\ell)\} +
%  \delta L(\eta)\\
%  & = \Delta t \sum_{\ell=1}^L \max
%  \{0,\frac{R(\eta(t_{\ell+1}))-R(\eta(t_\ell)}{\Delta t}\} +
%  \delta L(\eta),
%%\label{eqn:cpt_costs}
%\end{align*}
%\margina{shortening equations}
%\margin{OK for these. However, have in mind that removing equations
%  removes precision and clarity}

{\small \begin{equation*}
J^c(\eta) \approx \Delta t \sum_{\ell=1}^L \max
  \{0,\frac{R^c(\eta(t_{\ell+1}))-R^c(\eta(t_\ell))}{\Delta t}\} +
  \delta L(\eta),
%\label{eqn:cpt_costs}
\end{equation*}}
where $L(\eta)$ denotes the arc-length of the curve $\eta$, and
$\delta \in \realnonnegative$ is a constant 
encoding an urgency versus risk tradeoff. 
%\margin{not sure if the word
%``riskiness'' exists. I think the word is just risk. It sounds like
%'awesome' and 'awesomeness' to me} % This can be tuned to
% adjust to a specific environment or robot's capabilities. 
The greater
the $\delta$ value, the greater is the urgency and hence path length
is more heavily weighted whereas, smaller $\delta$ indicates greater
prominence towards risk. %  For example, a mars rover in a harsh
% environment would prefer a path with minimum risk, giving less
% importance to urgency and hence will have a smaller $\delta$
% value. Whereas, an autonomous car whose passenger is late for a
% meeting will be willing to take a few risks and try to reach the
% destination urgently, thus preferring a high $\delta$ value.
The choice of $\delta$ will be discussed in
Section~\ref{sec:results}.
%\margin{add discussion on $\delta$
%  there. I've cut the description down.}  \color{black}
\color{black}
   By taking limits in the previous expression, and due to the continuity
and integrability of $\max$, we can express $J^c(\eta)$ as:

{\small
\begin{align} 
 J^c(\eta)= & \lim_{\Delta t \rightarrow 0}  \Delta t \sum_{\ell=1}^L
 \max \{0,\frac{R^c(\eta(t_{\ell+1}))-R^c(\eta(t_\ell))}{\Delta
   t}\} +
 \delta L(\eta) \nonumber \\
 & = \int_0^1 \max \{0,\frac{d}{dt} (R^c(\eta(t))\} \text{d} t  + \delta
 L(\eta) = \nonumber \\
& \int_0^1 \max \{0, (R^c)'(\eta(t))\cdot \eta'(t) \}\text{d} t + \delta
L(\eta).
\label{eqn:cpt_costs}
\end{align}
}
%\margina{cut down the equations to make space} \margin{I've put these
%  back. Let's remove them as last resort.}

From here, the cost of traveling from $x$ to $y$ is given by

\begin{align*}
  J^c(x,y) \triangleq \min_{\eta \in \mathcal{P}(\mathcal{C}):
    \eta(0) = x , \eta(1) = y}
  J^c(\eta).
\end{align*}

Similarly, the path cost using expected risk
$J^e:\mathcal{P}(\mathcal{C}) \rightarrow \realnonnegative$ can be
obtained by replacing the CPT cost $R^c$ in~\eqref{eqn:cpt_costs} with
the expected risk $R^e$ as calculated in~\eqref{eqn:exp_risk}. 
\begin{remark}\longthmtitle{Monotonicity}
  It can be verified that the costs $J^c$ and $J^e$ satisfy monotonic
  properties in the sense that 1) they assign a positive cost to any
  path in $\mathcal{P}(\mathcal{C})$, and 2) given two paths $\eta_1$
  and $\eta_2$, and their concatenation $\eta_2|\eta_1$, in the space
  $\mathcal{P}(\mathcal{C})$, it holds that $J^c(\eta_1) \le
  J^c(\eta_1 |\eta_2)$ (resp.~$J^e(\eta_1) \le J^e(\eta_1 |\eta_2)$),
  (due to the additive property of the integrals) and 3) $J^c$
  (resp.~$J^e$) are bounded over a bounded $\mathcal{C}$.
  %\margin{now this sounds good to me. }
\label{rem:monotonicity}
\end{remark}

%\begin{remark}\longthmtitle{Boundedness}
%  The costs \eqref{eqn:cpt_costs} and \eqref{eqn:exp_costs} satisfy monotonic properties in the sense: for any $x,y,z \in \mathcal{C}$ we have $c(x,y)\leq c(x,y)+c(y,z)$ and similarly $c(x,y)\leq c(x,y)+c(y,z)$
%\end{remark}
% \margin{complete as before to refer to $R$ and $\delta$}

%\color{black}
%Now we can also introduce uncertainty in the cost
%$c(x,y)$. Logically, regions with higher uncertainty need to be
%assigned with higher costs. Entropy is a standard notion which
%captures this uncertainty. The higher the uncertainty, the higher is the
%entropy and this is the highest when the distribution is uniform. Now, the
%change in uncertainty between two points $x,y$ can be captured by
%relative entropy or the KL-Divergence ($\KL(x,y)$) 
%\margin{define in prelim} calculated between
%the uncertainty distributions along these two points. The
%KL-Divergence term can be added to the cost function to get:
%
% \begin{equation}
%c(x,y)= \max \{0,R(x)-R(y)\} + \delta_1 L(x,y) + \delta_2 KL(x,y).
%\end{equation}
\color{black}

%We will use this notion of {\color{black} human intuitive} cost in a
%sampling based path planning. We will introduce the proposed algorithm
%to perform human intuitive planning, which we call Hi-RRT*. In the
%case of the traditional RRT*, these costs will come into play when
%choosing candidate parent and children. . 
%\margin{this whole paragraph can be
%  omitted or rephrased and moved to the next subsection. And you are
%  saying everywhere 'human intuitive'. Please rephrase because we
%  can't give a wrong impression.}
%  \margina{okay makes sense, i removed it -> OK}
%\margin{you didn't use command lines inside the algorithm (e.g. for
  %cost) or subscr. I've replaced a few, but go ahead and replace the
  %others and change to the new notation}

\begin{algorithm}
\SetAlgoLined
Input: $T,x_s,x_g\ $; 
Output : $G(V,E), P$ \\
$V\leftarrow x_s$, $E \leftarrow \phi $, $\cost^c(x_s) \leftarrow 0 $\;
 \For{$i \in \until{T}$}{
  $G \leftarrow (V,E)$;
  $\subscr{x}{rand} \leftarrow Sample()$\;
 $\subscr{x}{nearest} \leftarrow Nearest(G,\subscr{x}{rand})$;
 $\subscr{x}{new} \leftarrow Steer(\subscr{x}{nearest},\subscr{x}{rand}) $\;
 $V \leftarrow V \cup {\subscr{x}{new}} $;
 $\subscr{x}{min} \leftarrow \subscr{x}{nearest}$\;
 $\subscr{X}{near} \leftarrow Near(G,\subscr{x}{new},\subscr{\gamma}{RRT*},d)$\;
 $\subscr{c}{min} \leftarrow \cost^c(\subscr{x}{nearest}) +\supscr{J}{c}(\subscr{x}{nearest},\subscr{x}{new})$ \;
 \For{$\subscr{x}{near} \in \subscr{X}{near}$}{
 $c' \leftarrow \cost^c(\subscr{x}{near})+J^c(\subscr{x}{near},\subscr{x}{new})$\;
 \If{$c'<c_{min}$}{
 $\subscr{x}{min} \leftarrow \subscr{x}{near}$;
 $\subscr{c}{min} \leftarrow c'$ \;
 }
 }
 $\cost^c(\subscr{x}{new}) \leftarrow \subscr{c}{min}$ ; 
 $E \leftarrow E \bigcup (\{\subscr{x}{near},\subscr{x}{new}\})$\;
 \For{$\subscr{x}{near} \in \subscr{X}{near}$}{
 $c' \leftarrow \cost^c(\subscr{x}{new})+J^c(\subscr{x}{new},\subscr{x}{near})$\;
 \If{$c'<\cost^c(\subscr{x}{near})$}{
 $\subscr{x}{par} \leftarrow Parent(\subscr{x}{near},G)$\;
 $E \leftarrow (E \setminus (\{\subscr{x}{par},\subscr{x}{near}\})) \bigcup (\{\subscr{x}{new},\subscr{x}{near}\})$ \;
 $\subscr{X}{chld} \leftarrow Children(\subscr{x}{near},G)$\;
 \For{$\subscr{x}{chld} \in \subscr{X}{chld}$}{
 $\cost^c(\subscr{x}{chld}) \leftarrow~\cost^c(\subscr{x}{chld}) - \cost^c(\subscr{x}{near}) + c' $
 }
 $\cost^c(\subscr{x}{near}) \leftarrow c'$
 }
 }
 }
$P \leftarrow Path(G,x_s,x_g)$ \;
\caption{CPT-RRT*}\label{alg:hi-rrt*}
\end{algorithm}

\paragraph{Proposed Algorithm}

Now we have all the elements to adapt RRT* to our problem
setting. Given $\mathcal{C}$, a number of iterations $T$ and a
start point $x_s \in \mathcal{C}$, we wish to produce graph $G(V,E)$,
which represents a tree rooted at $x_s$ whose nodes $V$ are sample
points in the configuration space and the edges $E$ represent the path
between the nodes in $V$. Let $\cost^c: \mathcal{C} \rightarrow
\realnonnegative$
%\margin{I've created an operator name for cost,
 % otherwise the text used in math mode doesn't look right}
be a function that maps $x \in \mathcal{C}$ to the cumulative cost to
reach a point $x$ from the root $x_s$ of the tree $G(V,E)$ using the
CPT cost metric~\eqref{eqn:cpt_costs}. Similarly we define $\cost^e:
\mathcal{C} \rightarrow \realnonnegative$ for the expected cost
function $J^e$.

\begin{remark}\longthmtitle{Additivity}
  The cumulative costs $\cost^h$ and $\cost$ 
  %\margin{I've removed the
    %arguments $(.)$, we don't need to add it}
   are additive with
  respect to costs $J^c$ and $J^e$ in
  the sense that: for any $x \in V$ we have
  $\cost^c(x)=\cost^c(Parent(x))+J^c(Parent(x),x)$ and similarly
  $\cost^c(x)=\cost^e(Parent(x))+J^c(Parent(x),x)$.
  \label{rem:additivity} 
\end{remark}
The other basic functional components of our algorithm CPT-RRT*
(Algorithm~\ref{alg:hi-rrt*}) are similar to RRT*,
  and we briefly outline it out here for the sake of completeness:
%\margin{summarize this part as much as possible}
%\margin{run a spell check, algorithm is
  %not well spelled}
%\margin{refer to its name and to the table} 
%are briefly explained as follows:
\begin{itemize}
\item $Sample()$: Returns a pseudo-random sample $x \in \mathcal{C}$
  drawn from a uniform distribution across $\mathcal{C}$.
   Other risk-averse sampling schemes as in
  \cite{DD-TS-JC:16} may be employed. However, such schemes lead to 
  conservative plans, which may not be suitable for all risk profiles.
  \color{black}
\item $Nearest(G,x)$: Returns the nearest node according to the
  Euclidean distance metric from $x$ in tree~$G$.
  %\margin{nearest according to euclidean metric?}
\item $Steer(x_1,x_2)$ returns \[ \begin{cases}
      x_2, & \text{if $\|x_2 - x_1\| \leq d$} \\
      x_1 + d \frac{x_2-x_1}{\|x_2 - x_1\|}, & \text{otherwise.}
    \end{cases}
\]
\item $Near(G,x,\gamma_{RRT^*},d)$: returns a set of nodes $X \in V$
  around $x$, which are within a radius as given in~\cite{SK-EF:11}.
\item $Parent(x,G)$: Returns the parent node of $x$ in the tree~$G$.
\item $Children(x,G)$: Returns the list of children of $x$ in~$G$.
\item $Path(G,x_s,x_g)$: Returns the path from the nearest node to $x_g$
  in $G$ to $x_s$. 
%  It first uses the $Nearest(G,x_g)$ function, and
%  then applies $Parent(x,G)$ function recursively to obtain path from
%  $x_s$ to the node nearest to $x_g$.   
\end{itemize}
%Since we redefined the costs, we have to explain better how we
%  approximate the costs over paths What we can say is that for each
We note that in order to compute $J^c$ for each path, we approximate
the cost as the sum of costs over its edges, $(x_1,x_2)$, and for each
edge we compute the cost as the differences $\max \{ 0, R^c(x_2) -
R^c(x_1)\} + \delta L(x_1, x_2)$, where the latter is just the length
of the edge. Then, this approximation will approach the computation of
the real cost in the limit as the number of samples goes to
infinity. The values $R^c$ are evaluated according to
Algorithm~\ref{alg:CPT_env}. Our proposed CPT-RRT* algorithm augments RRT* algorithm in
the following aspects: we consider a general continuous cost profile
which leads to no obstacle collision checking.  We also consider both
path length and CPT costs for choosing parents and rewiring with the
parameter $\delta$ which serves as relative weighting between CPT
costs and Euclidean path length.

%\margin{How does this algorithm differ from the standard RRT*? This is
%  something a reviewer will ask for sure. We don't have a collision
%  check primitive. Could it be added? And is there a way of
%  improving the RRT* we use for our case? For example, could we relate
%  a collision check with a certain low risk perception to limit the
%  collision checks and provide some guarantees or not surpassing a
%  given risk? Maybe you can think of other options here. I feel the
%  departure from the standard RRT* is not significant enough so that a
%  motion planning person may not accept this as a  contribution.  }
%  \margina{Okay, I have added the above explaining the differences. 
%  }
% After the creation of the tree $G$ rooted at $x_s$, we wish to find
% the path $P$ from $x_s$ to our goal $x_g$. To do this, we first find
% the nearest node to $x_g$ in $G$ using the $Nearest(G,x)$
% function. Now using the $Parent(x,G)$ function recursively we obtain
% the path from $x_s$ to a node nearest to $x_g$.

\begin{remark}\longthmtitle{ER-RRT*}
  We can obtain the expected risk version of
  Algorithm~\ref{alg:hi-rrt*} by replacing cost function $J^c$ by
  $J^e$ and following the same procedure as
  Algorithm~\ref{alg:hi-rrt*}.
  %The results are visualized in
  %Figure~\ref{fig:intro_plan_er}.
\label{rem:er_rrt*}  
\end{remark}
%\margin{I'd like to state more formally this remark, as a lemma or a
 % theorem, but the truth is that the statement in Karaman's paper is
  %not very formal either. I'm going to try}
\begin{lemma}\longthmtitle{Asymptotic Optimality} 
  Assuming compactness of $\mathcal{C}$ and the choice of $\gamma_{RRT^*}$ according to Theorem~38 in~\cite{SK-EF:11} %\margin{here choose the paramter as in RRT* as in
    %the statement of theorem 39 in Karaman's paper}. 
    , the CPT-RRT*
  algorithm is asymptotically optimal.
\end{lemma}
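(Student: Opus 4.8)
The plan is to reduce the claim to a direct application of Theorem~38 of~\cite{SK-EF:11}, whose hypotheses are a set of structural conditions on the cost functional (monotonicity, boundedness, and a Lipschitz-type continuity) together with the existence of a robustly optimal path. Hence the proof amounts to verifying that the CPT path cost $J^c$ of~\eqref{eqn:cpt_costs}, and the induced cumulative cost $\cost^c$, meet each of these conditions on the compact configuration space $\mathcal{C}$; the identical argument then applies verbatim to $J^e$ and thus covers the ER-RRT* variant of Remark~\ref{rem:er_rrt*}.

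First I would dispatch the structural properties already recorded. Remark~\ref{rem:monotonicity} furnishes positivity, the monotonicity inequality $J^c(\eta_1) \le J^c(\eta_1 | \eta_2)$ under concatenation, and boundedness of $J^c$ over the bounded $\mathcal{C}$, while Remark~\ref{rem:additivity} supplies additivity of the cumulative cost along the tree. The boundedness condition in the precise form of~\cite{SK-EF:11}, namely $J^c(\eta) \le k_c\, L(\eta)$ for some constant $k_c$, follows because on the compact $\mathcal{C}$ the continuous gradient $(R^c)'$ is bounded, say by $G_0$, so the integrand $\max\{0,(R^c)'(\eta(t))\cdot\eta'(t)\}$ is at most $G_0\|\eta'(t)\|$, whence $J^c(\eta) \le (G_0+\delta)\,L(\eta)$.

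Next I would establish the continuity hypothesis, the first genuinely new step. The outer map $s \mapsto \max\{0,s\}$ is $1$-Lipschitz, and $(R^c)'$ is uniformly continuous on the compact $\mathcal{C}$; therefore, for two paths that are uniformly close with close derivatives, the integrands differ by a controlled amount, and together with the elementary continuity of the arc-length term $\delta L(\cdot)$ this yields the Lipschitz continuity of $J^c$ under path perturbations required by Theorem~38. The differentiability of $R^c$, emphasized right after~\eqref{eqn:cpt_risk}, is exactly what makes this step go through; without it the $\max$-integrand could fail to be well behaved.

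The hard part, and the final step, is the robustness of the optimal path: one must exhibit, for the optimal trajectory from $x_s$ to $x_g$, a sequence of paths with strictly positive clearance whose costs converge to the optimum, so that the connections realized by CPT-RRT* become arbitrarily near-optimal. Since our environment is a continuous cost map with no discrete obstacles, every path is feasible and the clearance condition is essentially automatic for interior $x_s,x_g$; the real effort is to control the gap between the edgewise discrete cost $\max\{0,R^c(x_2)-R^c(x_1)\}+\delta L(x_1,x_2)$ used by the algorithm and the integral cost~\eqref{eqn:cpt_costs} as the sample count, and hence the discretization resolution, tends to infinity. On a short edge $R^c$ is nearly monotone, so the endpoint-difference term converges to the line integral of the $\max$; showing that this approximation error vanishes uniformly — again via boundedness of $(R^c)'$ and the $1$-Lipschitz $\max$ — closes the argument and, through Theorem~38, delivers almost-sure convergence of the cost returned by CPT-RRT* to the optimal cost.
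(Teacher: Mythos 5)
Your overall strategy coincides with the paper's: both proofs reduce the lemma to Theorem~38 of~\cite{SK-EF:11} and then verify its hypotheses, taking monotonicity from Remark~\ref{rem:monotonicity}, cumulativeness/additivity from Remark~\ref{rem:additivity}, and boundedness from compactness of $\mathcal{C}$. On several points you are actually more careful than the paper: your explicit estimate $\max\{0,(R^c)'(\eta(t))\cdot\eta'(t)\}\le G_0\|\eta'(t)\|$, yielding $J^c(\eta)\le (G_0+\delta)L(\eta)$, is exactly the form $c(\sigma)\le k_c\,\mathrm{TV}(\sigma)$ demanded in~\cite{SK-EF:11}, whereas the paper only invokes ``compactness plus continuity''; and your discussion of clearance (essentially automatic here, since the environment is a continuous cost map with no hard obstacles) and of the gap between the edgewise cost $\max\{0,R^c(x_2)-R^c(x_1)\}+\delta L(x_1,x_2)$ and the integral cost~\eqref{eqn:cpt_costs} addresses points the paper either leaves implicit or handles in the text outside the lemma.

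There is, however, one hypothesis of Theorem~38 that the paper verifies and your write-up never mentions: the requirement that the set of points of an optimal trajectory has measure zero in $\mathcal{C}$. The paper discharges it in one line --- because both $J^c$ and $J^e$ contain the additive term $\delta L(\eta)$ with $\delta>0$, an optimal path must have finite length, and a finite-length (rectifiable) curve in $\real^n$, $n\ge 2$, has Lebesgue measure zero. Relatedly, the paper also records the condition that $J^c(\eta)=0$ if and only if $\eta$ reduces to a single point (again immediate from the $\delta L(\eta)$ term), which your checklist also skips. Neither omission is fatal, since both verifications are trivial in this setting, but as written your list of the hypotheses of Theorem~38 is incomplete, and these are precisely the items needed to make the reduction to~\cite{SK-EF:11} airtight; conversely, your clearance and discretization arguments would be a worthwhile supplement to the paper's own proof, which omits them.
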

\begin{proof}
  It follows from the application of Theorem~38 in~\cite{SK-EF:11},
  and the conditions required for the result to hold. More precisely,
  the cost functions are monotonic (which follows from
  Remark~\ref{rem:monotonicity}), it holds that $c(\eta) = 0$ iff
  $\eta$ reduces to a single point (resp.~the same for $c$), and the
  cost of any path is bounded. The latter follows from the compactness
  of $\mathcal{C}$ and continuity of the cost functions. In addition,
  the costs are also cumulative, due to the additivity
  property in Remark~\ref{rem:additivity}. Finally, the result also requires
  the condition of the zero measure of the set of points of an optimal
  trajectory. This holds because both costs include a term for path
  length.
\end{proof}

%\margin{change this paragraph}
Simulation results of CPT-RRT* algorithm are presented
  in Section~\ref{sec:results_env_plan}. Next we describe our proposed
  method to evaluate and compare risk perception models in our
 setting.

\section{CPT-planner parameter adaptation}
\label{sec:learning}
\label{sec:learning_SPSA}
% In the previous sections we have explained how to plan a risk
% sensitive path using our algorithm CPT-RRT*, detailed in
% Algorithm~\ref{alg:hi-rrt*}.
 %\margin{so here you are admitting
  %we directly use RRT*. Why not refer to your algorithm's name's?}
\color{black} 
%\margin{I'm changing this to make it consistent with our
%  new plan of ``learning''. Instead of ``learn'' we are going to
%  use the word ``adapt''.}

In this section,  we theoretically compare CPT value function with other risk perception functions and \color{black} we describe an algorithm that can adapt the CPT
parameters of the planner to approximate arbitrary paths in the
environment. By doing so, we aim to evaluate the expressive power of
the CPT risk perception model, both theoretically and in a motion planner \color{black} by comparing its capability to approximate single and arbitrary paths in the environment versus other approaches using different risk perception models.
% \margin{maybe we need to run the spsa for
%  different example paths then.}
%  \margina{Yes, that is what i am doing}

If successful, this method could be used as a first ingredient in a
larger scheme aimed at learning the risk function of a human decision
maker\footnote{Just for offline planning, or in situations where the
  human does not update the environment online as new information is
  found.} using techniques such as inverse reinforcement learning
(IRL). We recall that IRL requires either discrete state and action
spaces or, if carried out over infinite-dimensional state and action
spaces, a class of parameterized functions that can be used to
approximate system outputs. Since our planning problem is defined over a
continuous state and action space, the class of CPT planners for a
parameter set could play the role of a function approximation class
required to apply IRL. Then, as is done in IRL, a larger collection of
path examples can used to learn the best weighted combination of
specific CPT planners in the class. While certainly of interest, this
IRL question is out of the scope of this work, and we just focus on
analyzing the expressive power of the proposed class of CPT
planners. Having a good expressive power is a necessary prerequisite
for the class of CPT planners to constitute a viable function
approximation class. Firstly, we will define the notion of expressiveness and compare the expressiveness risk perception models from a theoretical point of view. Next, we will describe an approach to compare expressiveness in a path planning setting using SPSA.

\subsection{Expressiveness for a risk perception model.}
Let $\rho$ be a random cost variable with an associated probability distribution. Let $R$ be a risk value function (with $R(\rho) \in \realnonnegative $) which associates a real value to the random cost variable $\rho$. We can compare the expressiveness of two risk perception models by comparing the range space of their respective risk value functions. 
\begin{definition}\longthmtitle{Expressiveness}
	Consider two risk perception models $\mathcal{M}_1$ and $\mathcal{M}_2$ with corresponding classes of risk value functions $\mathcal{V}_1$ and $\mathcal{V}_2$ with respective range spaces $\mathcal{R}_1$ and $\mathcal{R}_2$. We say that $\mathcal{M}_1$ is more expressive ($\geqq$) than $\mathcal{M}_2$ if $\mathcal{R}_2 \subseteq \mathcal{R}_1$ for any given positive random variable $\rho$. That is, $$ \mathcal{M}_1 \geqq \mathcal{M}_2 \iff \{R_2(\rho) | R_2 \in \mathcal{V}_2\} \subseteq \{R_1(\rho) | R_1 \in \mathcal{V}_1\} $$
\end{definition}

With this definition, we can compare expressivity of CPT with
Conditional Value at Risk (CVaR)~\cite{PA-FD-JME-DH:99}, also known as ``expected
shortfall'', another popular risk perception model in the financial
decision making community.  CVaR uses a single parameter $q \in [0,1)$
representing the fraction of worst case outcomes to consider for
evaluating expected risk of an uncertain cost $\rho$.  We will use
$R^{v}_{Q}$ to denote the perceived risk by CVaR model with $q=Q$.  So
a $q \approx 1$ considers the worst case outcome of $\rho$ and a $q =
0$ considers all the outcomes thus making the CVaR value equal to
expected risk ($R^v_0=R_E$).
Now we will proceed to compare expressiveness of Expected Risk, CVaR and CPT with parametrized risk value function classes $R^e$, $R^v$ and $R^c$ respectively. 

\begin{proposition}
	\label{prop:exp_risk_express}
	Let us Consider Expected risk (ER), CVaR and CPT risk models with risk value function classes $R^e$, $R^v$ and $R^c$ defined accordingly. Then, Expected Risk is the least expressive of the three models, that is $\text{CVar} \geqq \text{ER}$ and $\text{CPT} \geqq \text{ER}$ for any given random variable $\rho$.   
\end{proposition}
	\begin{proof}
		The function class $R^e$ has a single function $\mathbb{E}(\rho)$ which gives the expected value of $\rho$. So the range set of $R^e$ is a singleton, containing the expected value of $\rho$.  It is easy to see that by choosing a function $R^v_0 \in R^v$ and $R^c_{\overline{\Theta}}
		 \in R^c$ where $ \overline{\Theta}= \{1,1,1,1\} $ we have $R^v_0(\rho)=R^e(\rho)=R^c_{\overline{\Theta}}$. Which implies that $ \mathbb{E}(\rho) \in \{R^v_q(\rho) | q \in [0,1)\}$  and also $ \mathbb{E}(\rho) \in \{R^c_\Theta(\rho) | R^c_\Theta \in R^c\}$, thus proving the expressive order. 
		\end{proof}

Next we will look at the relationship between the CVaR value of a random variable $\rho$ and the expected value of another random variable $\kappa \rho$ where $\kappa > 0$ is a scaling factor.

\begin{proposition}
	Let us consider CVaR value of a given random variable $\rho$, then there exists a $\kappa_q \geq 1$ such that $R^v_q(\rho)= \mathbb{E}(\kappa_q \rho$) for all $q \in [0,1)$. 
\end{proposition}
\begin{proof}
	The range space of $R^v$ is $[\mathbb{E}(\rho), b]$, where $b$ is the worst case outcome of $\rho$. We also know that $\mathbb{E}(\kappa\rho)= \kappa \mathbb{E}(\rho)$. From this we can construct $\kappa_q=\frac{R^v_q(\rho)}{\mathbb{E}(\kappa\rho)}$ which shows that $R^v_q(\rho)= \mathbb{E}(\kappa_q\rho)$ 
	\end{proof} 

From this we can compare the expressivity of CPT and CVaR models.

\begin{proposition}
	Let us consider CVaR and CPT risk models with risk value function classes $R^v$ and $R^c$ defined accordingly. Then, we have the expressive order $\text{CPT} \geqq \text{CVaR}$ for any given random variable $\rho$.
\end{proposition} 

\begin{proof}
	Considering a subclass of CPT value functions $R^c_{\Theta*}$ where $\Theta* \in \{\Theta| \alpha=1,\beta=1,\gamma=1,\lambda=\kappa_q\}$, we have CPT value $R^c_{\theta}=\mathbb{E}(\kappa_q\rho)$ with $\theta \in \Theta*$ and some constant $\kappa_q$ defined according to previous proposition. From this we can say that $\{R^v_q(\rho) | q \in [0,1)\}$
 $\subseteq \{R^c_{\theta}| \theta \in \Theta*\}$
   $\subseteq \mathcal{R}(R^c)$, which concludes the proof.   
	\end{proof} 

\begin{lemma}
	\label{lem:CPT_expressiveness}
	Let us Consider Expected risk (ER), CVaR and CPT risk models with risk value function classes $R^e$, $R^v$ and $R^c$ defined accordingly. Then, we have the following expressive order : $\text{CPT} \geqq \text{CVaR} \geqq \text{ER} $.
\end{lemma}	

\begin{proof}
	The proof immediately follows from the above three propositions.
	\end{proof}
The above arguments imply that risk aversion can equivalently be modeled as an expected value of a scaled random variable, with greater scaling implying higher risk aversion. This is captured in both CPT ($\lambda$ parameter) and CVaR ($q$ parameter) models. Additionally, CPT also captures risk sensitivity and uncertainty sensitivity which makes it more expressive than CVaR. This can be visualized in Fig.~\ref{fig:empirical_expressiveness}. A thousand samples of $\Theta \in \real^4$ were drawn uniformly randomly for CPT, while for CVaR, $q$ was sampled uniformly across $[0,1]$. Three distributions: Normal, half Normal and Uniform, were considered with mean $100$ and standard deviation of $10$ for the first two and a range of $[70,130]$ for the latter. The median values for each box
plot is indicated on the top row. The mean value of the distribution
is indicated as ``stars'', the black lines above and below the box
represent the range, and $+$ indicates outliers. It can be clearly seen that the range of values captured by CPT is greater, which is in accordance with the theoretical argument above. Next, we will propose a method to evaluate expressiveness in the context of path planning.  

\begin{figure}
	\centering
	\includegraphics[width=\linewidth]{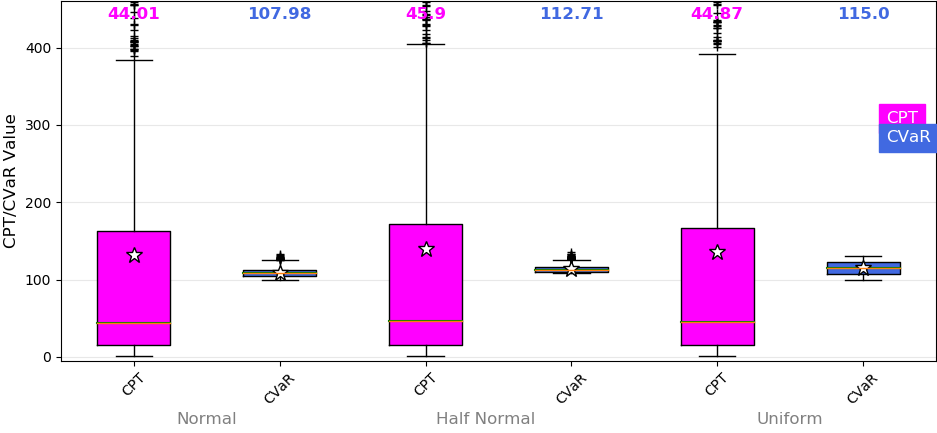}
	\caption{Boxplots showing the distribution of 1000 value samples of CPT and CVaR risk perception model for three different distributions.}
	\label{fig:empirical_expressiveness}
\end{figure}
   
\subsection{Comparing expressiveness in path planning}
%\margin{From here on, please rephrase the section to conform with the
%  previous objectives. So eliminate ``decision maker'' and
%  ``learning'' from the picture. Also change the loss function
%  notation by something else, like ``$\mathcal{L}$''}
  
%Let us suppose that a decision maker has a particular set of
%parameters $\Theta_d$, which we wish to use in the path planning
%problem, but this set $\Theta_d$ is
%unknown. % So we need to learn the parameters $\Theta_d$ before
% we can proceed to the planning phase. 
%A natural way to observe $\Theta_d$ is to let the decision maker draw
%a candidate path $P_d$ on a given environment. 
%\margin{I've erased any reference to 'drawing' a path}

\color{black}
 Let us suppose that we have an arbitrary
example path $P_d$ drawn in the environment. If the class of CPT
planners is expressive enough, we should be able to find a set of
parameters that is able to to exactly mimic this drawn path.  Since an
arbitrary path $P_d$ belongs to a very high dimensional
space\footnote{An arbitrary path can be modeled as a curve defined by
  a large number of parameters (possibly infinite). } and the planner
parameters are typically finite, any amount of parametric tuning may
not produce good approximations.  This is what we evaluate in the
following. We use the term $\Ar(P;P_d) \in
\realnonnegative$ to denote the area enclosed between the given path
$P_d$ and another path $P$. This value measures the closeness between
$P$ and $P_d$.

A path $P$ produced by a CPT planner can be represented by the CPT
parameters $\Theta$. In order to find the closest possible path $P^*$
to $P_d$ we have to evaluate
\begin{equation}
  \argmin_{P_{\Theta}, \Theta \in \mathcal{T}} \Ar(P_{\Theta};P_d),
\label{eqn:area_argmin}
\end{equation}   
where $P_{\Theta}$ is the path produced by CPT-RRT* with CPT
parameters $\Theta$, and $\mathcal{T}$ is the set of all possible
values of $\Theta$. Directly evaluating \eqref{eqn:area_argmin} is
computationally not feasible as the set $\mathcal{T}$ is infinite and
resides in $4D$ space.

%To learn these parameters, one can employ a loss function $J:
%\bm{\Theta} \rightarrow \realnonnegative $, where $\bm{\Theta}$ is the
%space for all sets of parameters $\Theta$ introduced in
%Section~\ref{sec:PT},
%% \margin{define the function with domain and range
%%  to unify notation} 
%and such that $J(\Theta)=0 \Longleftrightarrow \Theta=\Theta_d $ and
%$J(\Theta)>0\ \forall\ \Theta \neq \Theta_d$. The natural choice of
%$J$ would be the value of an optimal path joining a start and a goal
%position, which depends on the parameters $\Theta$.
An alternative to \eqref{eqn:area_argmin} is to use parameter
estimation algorithms to determine $\Theta^* \in \mathcal{T}$ which
characterizes the path $P^*$ with $\Ar(P_{\Theta};P_d)$ as a loss/cost
function. We note that neither $\Ar(P_{\Theta};P_d)$ can be computed
directly (without running CPT-RRT* first), nor the gradient of $\Ar$
wrt $\Theta$ is accessible. This limits the use of standard gradient
descent algorithms to estimate $\Theta^*$.  To address this problem,
we use SPSA~\cite{JCS:03} with $\Ar(P_{\Theta};P_d)$ as the loss
function to estimate the parameters $\Theta^*$.  Next, we briefly
explain the main idea and adaptation of SPSA to our setting and refer
the reader to~\cite{JCS:03}
%\margin{use tildes}
for more detailed treatment and analysis of the SPSA algorithm.

We start with an initial estimate $\Theta_0$ and iterate to produce
estimates $\Theta_k$, $k \in \integernonnegative$ using the loss
function measurements $\Ar(P_{\Theta_k};P_d)$.  The main idea is to
perturb the estimate $\Theta_k$ according to ~\cite{JCS:03} to get
$\Theta^+$ and $\Theta^-$, for the $\supscr{k}{th}$ iteration. These
perturbations are then used to generate the perturbed paths
$P_{\Theta^+}, P_{\Theta^-}$ using Algorithm~\ref{alg:hi-rrt*}. With
these perturbed paths, the loss function measurements
$\Ar(P_{\Theta^+};P_d),\Ar(P_{\Theta^-};P_d)$ are evaluated and used to
update our parameter $\Theta_k$ according to~\cite{JCS:03}. To test
the goodness of the updated parameter, we determine the corresponding
path $P_{\Theta_{k+1}}$ and measure $\Ar(P_{\Theta_{k+1}};P_d)$. If the
area is within a tolerance $\kappa \in \realpositive$, that is, if
$\Ar(P_{\Theta_{k+1}};P_d) < \kappa$, the iteration stops and
$P_{\Theta_{k+1}}$ is returned. We followed the guidelines from
\cite{JCS:03} for choosing the parameters used in SPSA.
%$\Delta_k,a_k,c_k$
%with $ a_k,c_k\ > 0\ ; \ a_k,c_k\ \rightarrow 0\ \text{as}\ k
%\rightarrow \infty\ $ and $\Delta_k$ is symmetrically distributed
%around 0 and $E(\Delta_k^2),E(\Delta_k^2) < \infty$. 
%\margin{explain these guidelines here (hopefully
%  there is something special to our problem case)} 
The results of this adaptation are evaluated and compared with the
results that employ other risk perception models in
Section~\ref{sec:results}.

\section{Results and Discussion}
\label{sec:results}
\color{black}
In this section 
%\margin{when ``Section'' does not have a label, just
%  use ``section''} 
  we illustrate the results of the solutions to the problem statement proposed in 
Sections~\ref{sec:CPT_environment},\ref{sec:planning}, and~\ref{sec:learning}
%\margin{these are ``Sections''}
% Firstly, we will use the environment described
%in Section~\ref{sec:CPT_environment}  shown in
%Figure~\ref{fig:intro_env_perception}, to illustrate planning in a general environment having continuous risk and uncertainty. Then, we will also illustrate the environment generation and planning 
considering a specific scenario having some risk and uncertainty profiles.

\subsection{Environment Perception and Planning}
\label{sec:results_env_plan}
We consider a hypothetical scenario where an agent needs to navigate
in a room during a fire emergency. In this, the 2D configuration space
for planning becomes $\mathcal{C} =~[-10,10] \times [-10,10]$.  The
agent is shown a rough floor map (Figure~\ref{fig:fire_env}) with
obstacles (which are thought to be ablaze) in the environment with a
blot of ink/torn patch, making that region unclear and hard to
decipher. This results in the spatial uncertain cost $\rho$ with first
moment ($\rho_\mu$) represented by cost associated to obstacles and
fire source and second moment ($\rho_\sigma$) represented by the
uncertainty associated to the ink spot/tear.

%The
%agent is provided with a rough floor map with uncertain measurements
%of the obstacles (which are thought to be ablaze) in the environment
%(source of risk), also having a blot of ink/ torn patch, making that
%region unclear and hard to decipher (source of
%uncertainty). \margin{here we need to be careful on how we phrase
%  ``source of risk'' and ``source of uncertainty''. Probably all of it
%  can be termed as uncertain cost. The difference between these terms
%  is not clear here. } This is illustrated in
%Figure~\ref{fig:fire_env}.

The blue colored objects are the obstacles whose location is known to
be within some tolerance (dark green borders) and the light orange 
%\margin{it
%  seems they are yellow} 
ellipses illustrate that these objects have
caught fire. The grey ellipse indicates a possible tear/ink spot on
the map, which makes that particular region hard to read. The start
and goal positions are indicated as blue spot and green cross
respectively.  We use a scaled sum of bi-variate Gaussian distribution
to model the sources of continuous cost (orange ellipses) with
appropriate means and variances to depict the scenario in
Figure~\ref{fig:fire_env}. We utilize bump functions from differential
geometry to create smooth ``bumps'' depicting the
discrete obstacles. One approach to do this is described in
Section~\ref{sec:environment_setup}. An alternative procedure is
briefly described as follows. Consider the maximum cost value imparted
to the obstacles as $\supscr{\rho}{max} \in \realnonnegative$ and let
$a_1,a_2,b_1,b_2 \in \realnonnegative$ be the inner (blue rectangle)
and outer (dark green borders) measurements of the obstacles from the
center $c = (c_1,c_2) \in \mathcal{C}$. Let $x = (x_1,x_2) \in
\mathcal{C}$ be a point in the configuration space with $f,g,h$ being
real valued scalar functions given by $f(y)= e^{-\frac{1}{y}}, y \in
\realpositive\ \text{and}\ f(y)= 0\ \text{otherwise} $, $g(y) =
\frac{f(y)}{f(y)+f(1-y)}$ and
$h(y)=1-g(\frac{y^2-a^2}{b^2-a^2})$. Then, $\rho_\mu(x)$ can be
calculated by :
\begin{equation}
\rho_\mu(x)= \supscr{\rho}{max}h(x_1-c_1)h(x_2-c_2).
\label{eqn:bump_fn}
\end{equation}
This procedure produces smooth ``bumps'' in the cost profile which are
visualized in Figure~\ref{fig:intro_mean_risk} using
$\supscr{\rho}{max}=20$. This approach can be easily generalized to
arbitrary high dimensions by simply multiplying upto $h(x_i-c_i)$
terms in \eqref{eqn:bump_fn} to create a bump function in the
$\supscr{i}{th}$ dimension.  To generate the second moment of cost $\rho_{\sigma}$, we
use a scaled bi-variate Gaussian distribution with appropriate means
and variances to depict the ink spot/tear in
Figure~\ref{fig:fire_env}. Now we will illustrate the results of implementing Algorithm~\ref{alg:hi-rrt*} in this environment.

\paragraph*{Simulations and discussions}
\label{sec:results_env_plan_sims}
\label{sec:results_env_plan_discussion}
With the uncertain cost $\rho$ with moments $\rho_\mu$ and $\rho_\sigma$
from previous paragraph, we use a half Normal distribution and
discretization factor $M=20$ to generate the costs $\rho(x)$ and their
corresponding $p(x)$ from Section~\ref{sec:CPT_environment}, the
results of using Algorithm~\ref{alg:CPT_env} to every point in $\mathcal{C}$ to generate the perceived
environment is shown in Figures~\ref{fig:intro_plan} and
\ref{fig:intro_env_perception}. The level of risk at a point $R^c$ or $R^e$ is indicated by color map. Figure~\ref{fig:intro_plan_er} shows
a rationally perceived environment using expected risk $R^e$. Whereas,
Figure~\ref{fig:intro_plan_cpt} indicates a non-rational highly risk
averse perception using CPT ($R^c$) with $\Theta=\{0.74,2,0.9,10\}$ having a
high $\lambda$ value. A risk indifferent profile
(Figure~\ref{fig:intro_CPT_risk2}) is generated by
$\Theta=\{0.74,1,0.3,2.25\}$ having a low risk sensitivity $\gamma$
value. Similarly, uncertainty indifferent profile
(Figure~\ref{fig:intro_CPT_unc1}) and uncertainty averse profile
(Figure~\ref{fig:intro_CPT_unc2}) are generated by fixing $\alpha$ and
having high and low $\beta$ values respectively.
\begin{figure*}
\begin{subfigure}[t]{0.09\linewidth}\centering
        \includegraphics[width=\textwidth]{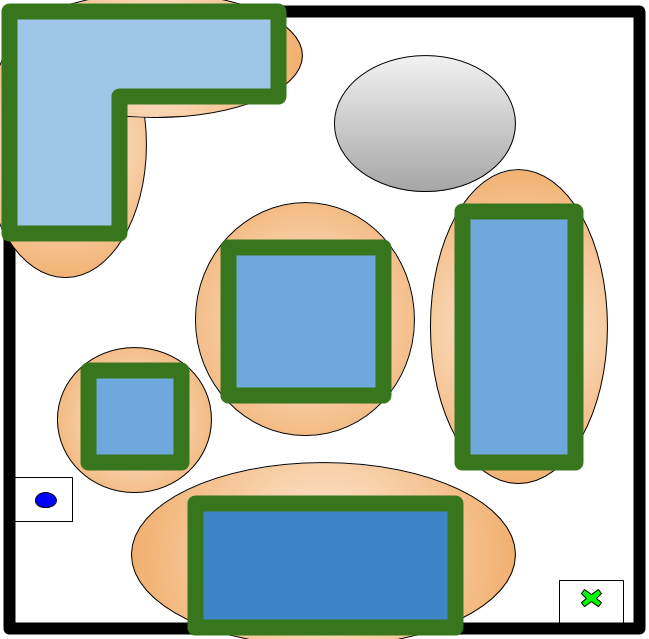}
        \caption{Rough sketch of environment.}
        \label{fig:fire_env}
    \end{subfigure}
    ~
    \begin{subfigure}[t]{0.17\linewidth}
        \includegraphics[width=\textwidth]{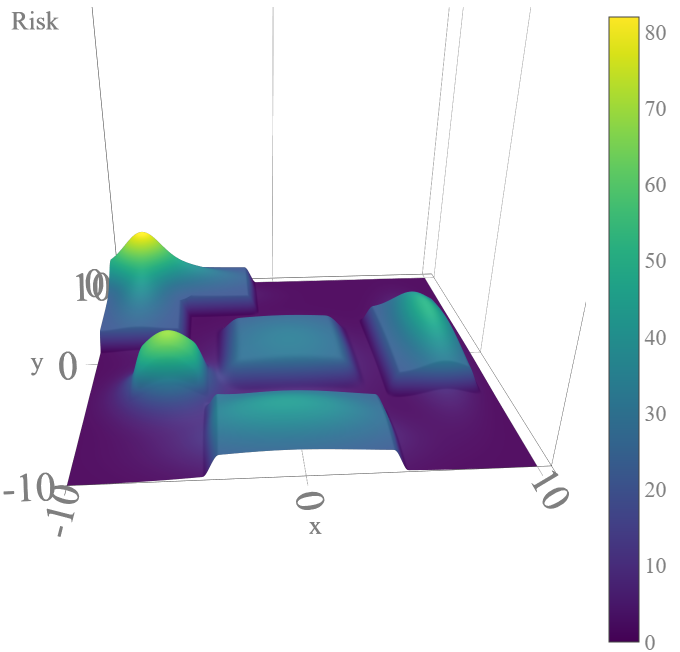}
        \caption{Mean cost $\rho_\mu$}
        \label{fig:intro_mean_risk}
    \end{subfigure}%
    ~
    \begin{subfigure}[t]{0.17\linewidth}
        \includegraphics[width=\textwidth]{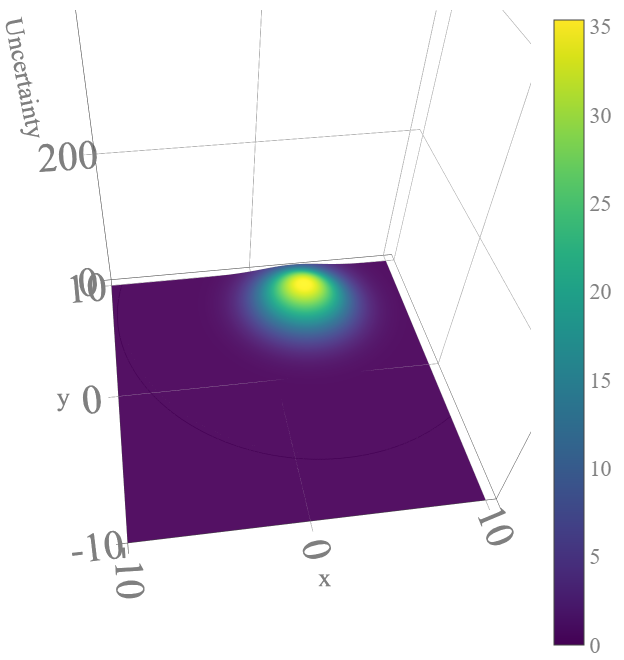}
        \caption{Uncertainty $\rho_\sigma$}
        \label{fig:intro_variance}
    \end{subfigure}%
    ~
%    \begin{subfigure}[t]{0.19\linewidth}
%        \includegraphics[width=\textwidth]{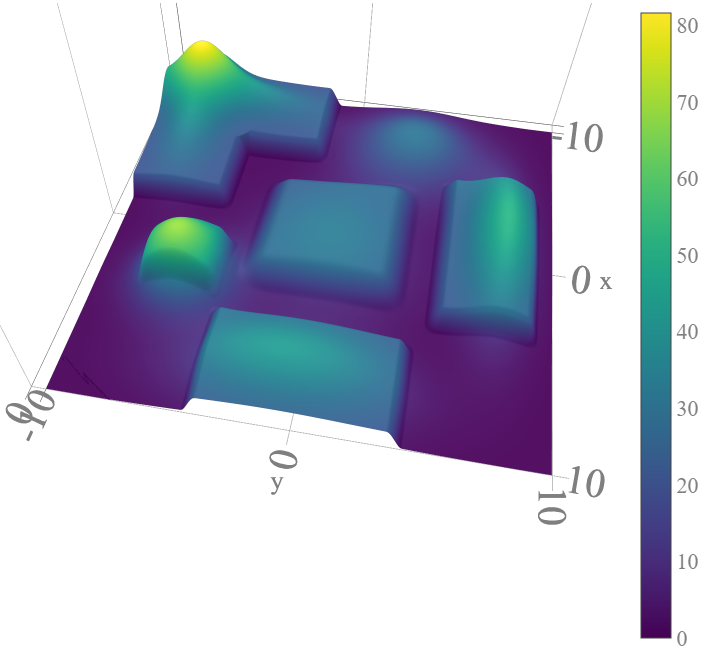}
%        \centering
%        \caption{Expected risk profile}
%        \label{fig:intro_exp_risk}
%    \end{subfigure}%
%    ~
%    \begin{subfigure}[t]{0.19\linewidth}
%        \includegraphics[width=\textwidth]{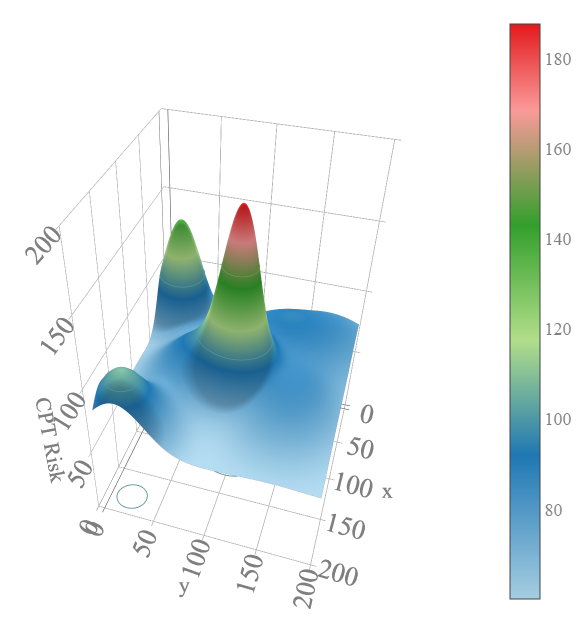}
%        \caption{CPT risk averse profile}
%        \label{fig:intro_CPT_risk}
%    \end{subfigure}%
    \begin{subfigure}[t]{0.17\linewidth}
        \includegraphics[width=\textwidth]{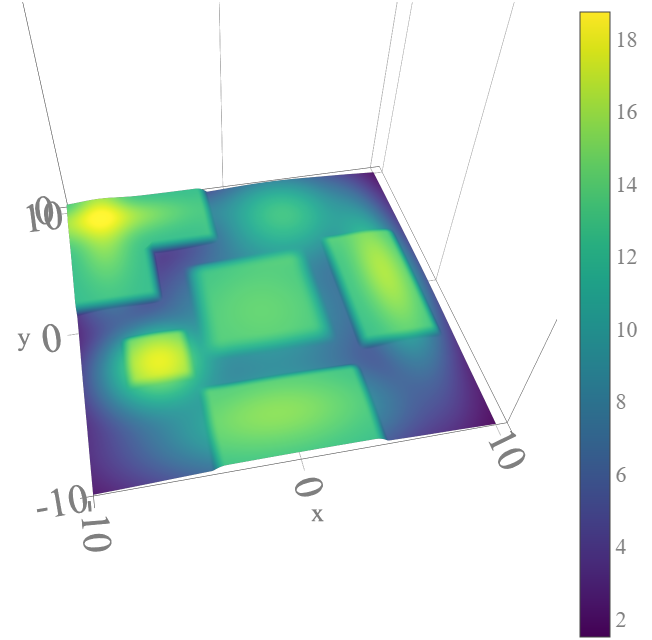}
        \centering
        \caption{Risk indifferent profile with $\Theta=\{0.74,1,0.3,2.25\}$}
        \label{fig:intro_CPT_risk2}
    \end{subfigure}
    ~
    \begin{subfigure}[t]{0.17\linewidth}
        \includegraphics[width=\textwidth]{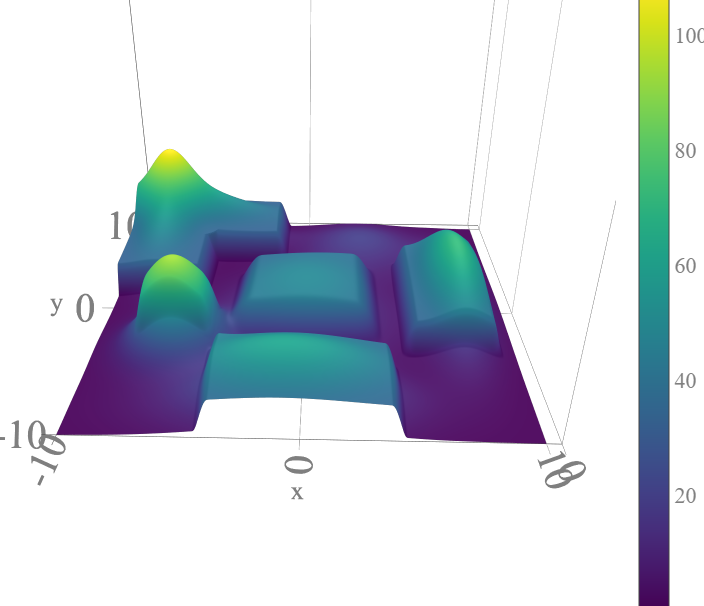}
        %\centering
        \caption{Uncertainty~indifferent profile~with $\Theta~=~\{0.74,3,0.88,2.25\}$}
        \label{fig:intro_CPT_unc1}
    \end{subfigure}
    ~
    \begin{subfigure}[t]{0.17\linewidth}
        \includegraphics[width=\textwidth]{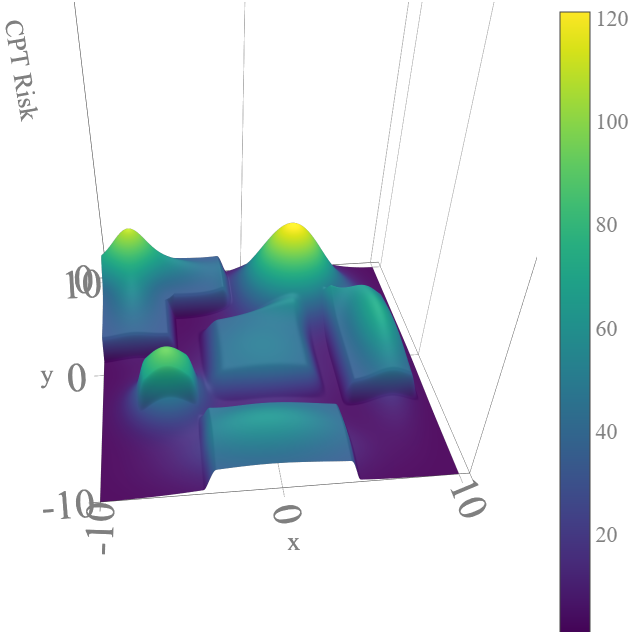}
        %\centering
        \caption{Uncertainty averse profile $\Theta=\{0.74,0.05,0.88,2.25\}$}
        \label{fig:intro_CPT_unc2}
    \end{subfigure}
    \caption[Caption]{Environment perception using CPT.}
    \label{fig:intro_env_perception}
\end{figure*}

After the perceived environment is generated,
Algorithm~\ref{alg:hi-rrt*} is used to plan a path from the start
point to the goal point shown in Figure~\ref{fig:fire_env}. We use
$T=20,000$ iterations for the CPT-RRT* algorithm with $\delta=10^{-4}$. The
same random seed was used for all executions for consistency. The path
planning results are illustrated in Figure~\ref{fig:path_changes}. As
expected, we see that the path depends on the perceived risk profile. Figure~\ref{fig:fire_env_cpt_risk_averse} indicates a circuitous
path due to the highly risk averse perception, whereas
Figure~\ref{fig:fire_env_exp} indicates a shorter and more direct path
for a rational DM using expected risk. Increasing the uncertainty
sensitivity (lowering $\beta$) and reducing risk aversion (lowering $\lambda$) makes the
planner avoid the highly uncertain ink spot/tear in the top-right
region and take a more riskier path in the lower region as shown in
Figure~\ref{fig:fire_env_cpt}. By having a medium risk aversion and lower uncertainty
sensitivity (increasing $\beta$), the planner produces a different path through the medium
risky and uncertain middle region as shown in
Figure~\ref{fig:fire_env_cpt_unc}.
%Algorithm~\ref{alg:hi-rrt*} are included here using the setup
%described in the previous paragraph with fixed $x_s=[0,8]$ and
%$x_g=[-2,-8]$. First, we fix $\lambda=2.25$, then we vary $\gamma$ as
%illustrated in Figure~\ref{fig:gamma_changes}, the CPT-RRT* tree is
%indicated as white lines and the final path produced is indicated in
%red. We see that the behavior is as expected, and with decrease in
%$\gamma$, the risk sensitivity decreases. We see that, in the cases where risk sensitivity is low
%when $\gamma=0.28,0.18$, the CPT-RRT* paths is almost a straight line
%indicating the indifference towards the risk profile of the
%DM. Whereas, in cases when $\gamma=0.88$ we see that the CPT-RRT*
%Algorithm avoids the high risk area in the center, indicating
%sensitivity towards risk.

\begin{figure}
%\centering

    \begin{subfigure}[t]{0.49\linewidth}\centering
        \includegraphics[width=\textwidth]{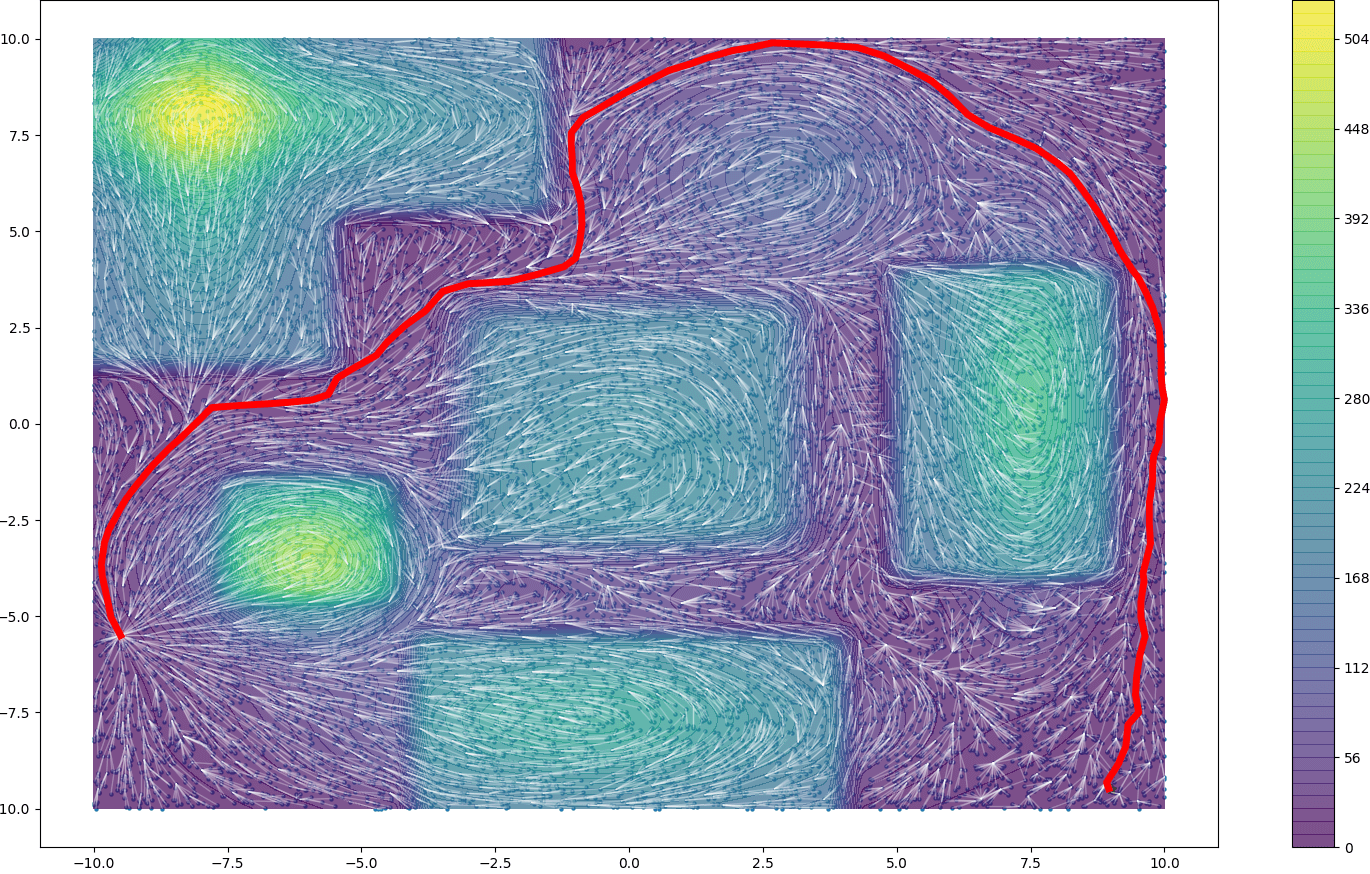}
        \caption{High cost and high uncertainty sensitivity ($\Theta=\{0.74,1,0.9,10\}$)}
        \label{fig:fire_env_cpt_risk_averse}
    \end{subfigure}%
    ~ 
    \begin{subfigure}[t]{0.49\linewidth}\centering
        \includegraphics[width=\textwidth]{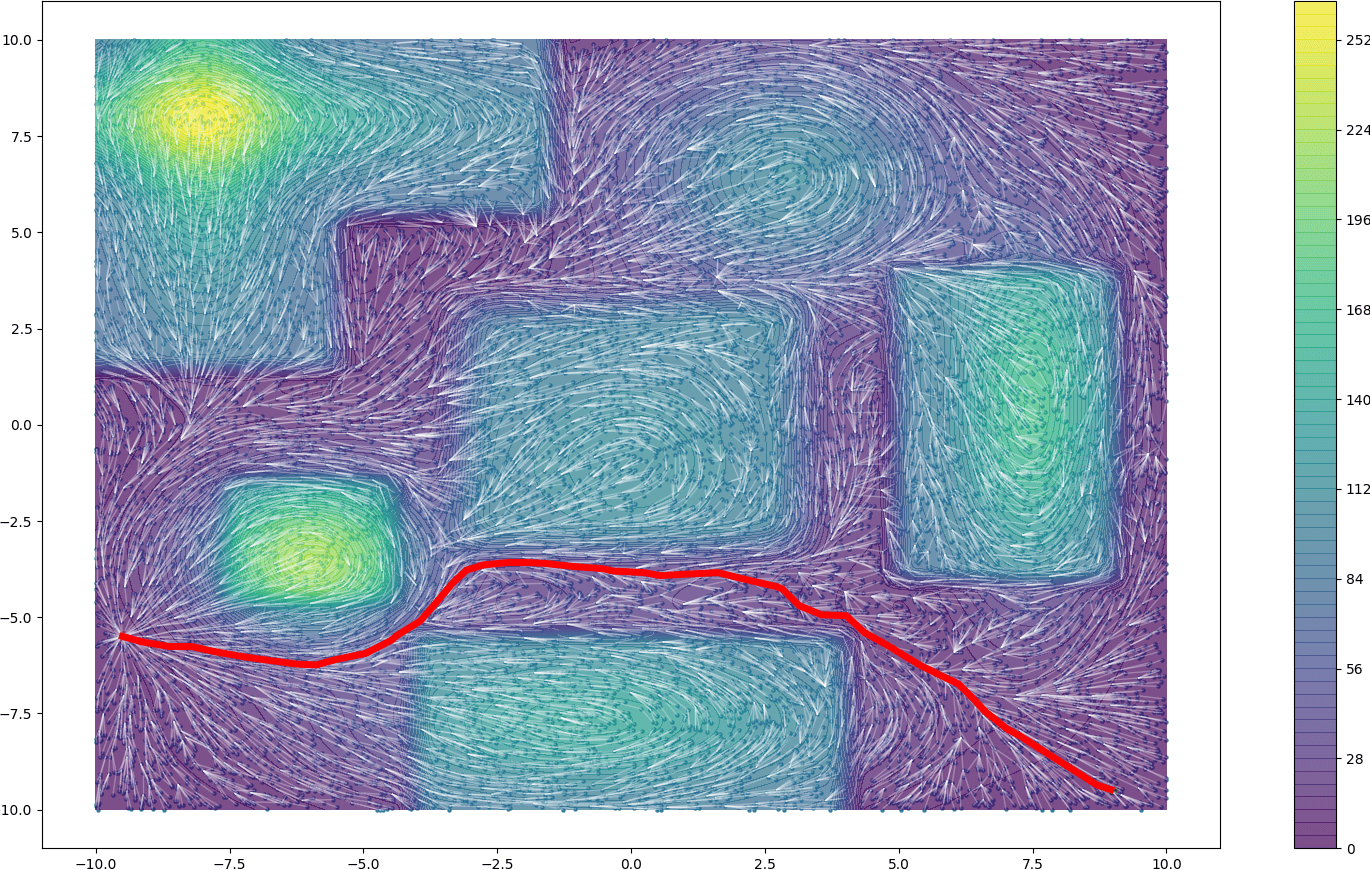}
        \centering
        \caption{Medium cost aversion and high uncertainty sensitivity ($\Theta=\{0.74,1,0.9,5\}$)}
        \label{fig:fire_env_cpt}
    \end{subfigure}
    
    \begin{subfigure}[t]{0.49\linewidth}\centering
        \includegraphics[width=\textwidth]{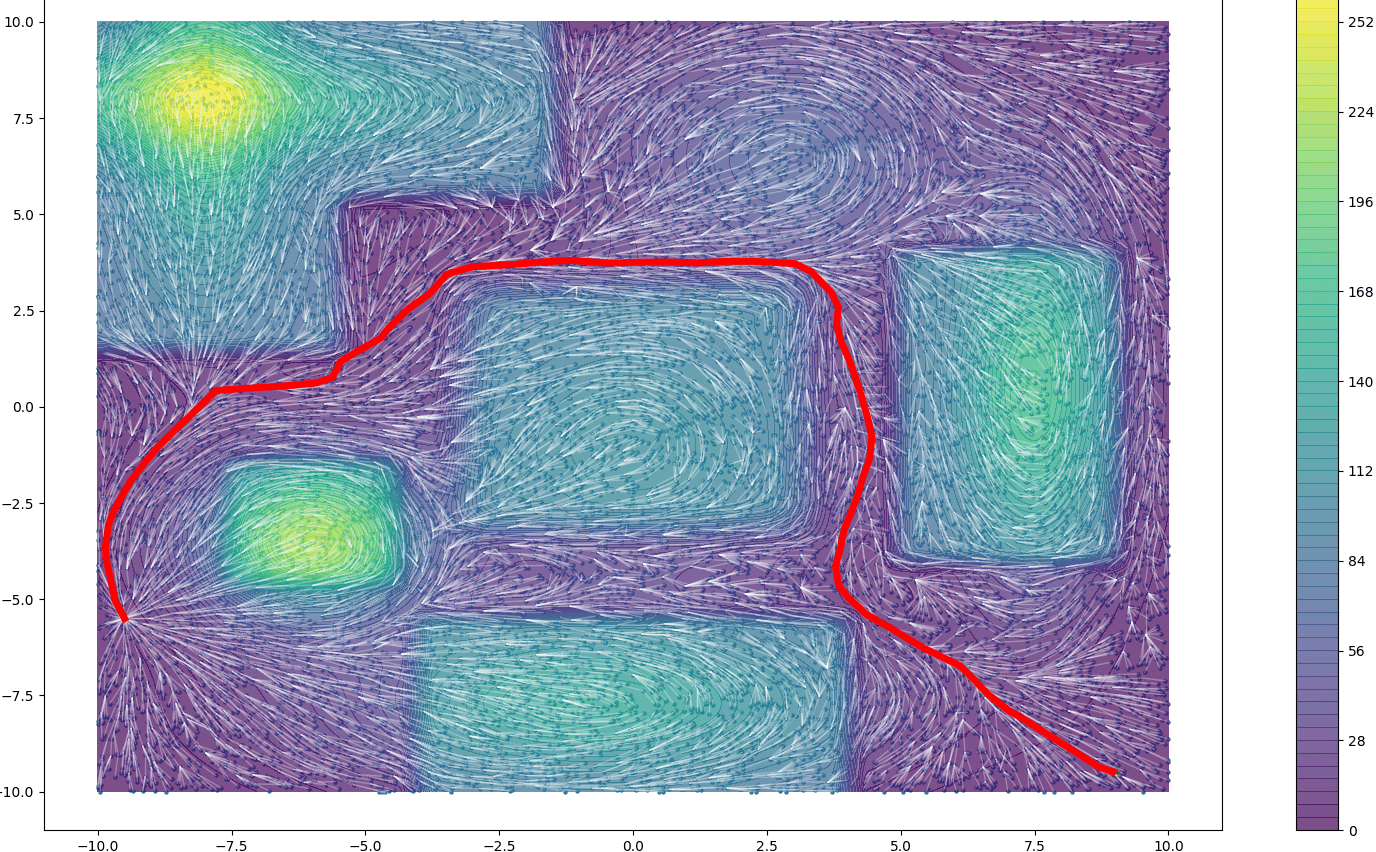}
        \caption{Medium cost aversion and low uncertainty sensitivity ($\Theta=\{0.74,2,0.9,5\}$)}
        \label{fig:fire_env_cpt_unc}
    \end{subfigure}%
    ~ 
    \begin{subfigure}[t]{0.49\linewidth}\centering
        \includegraphics[width=\textwidth]{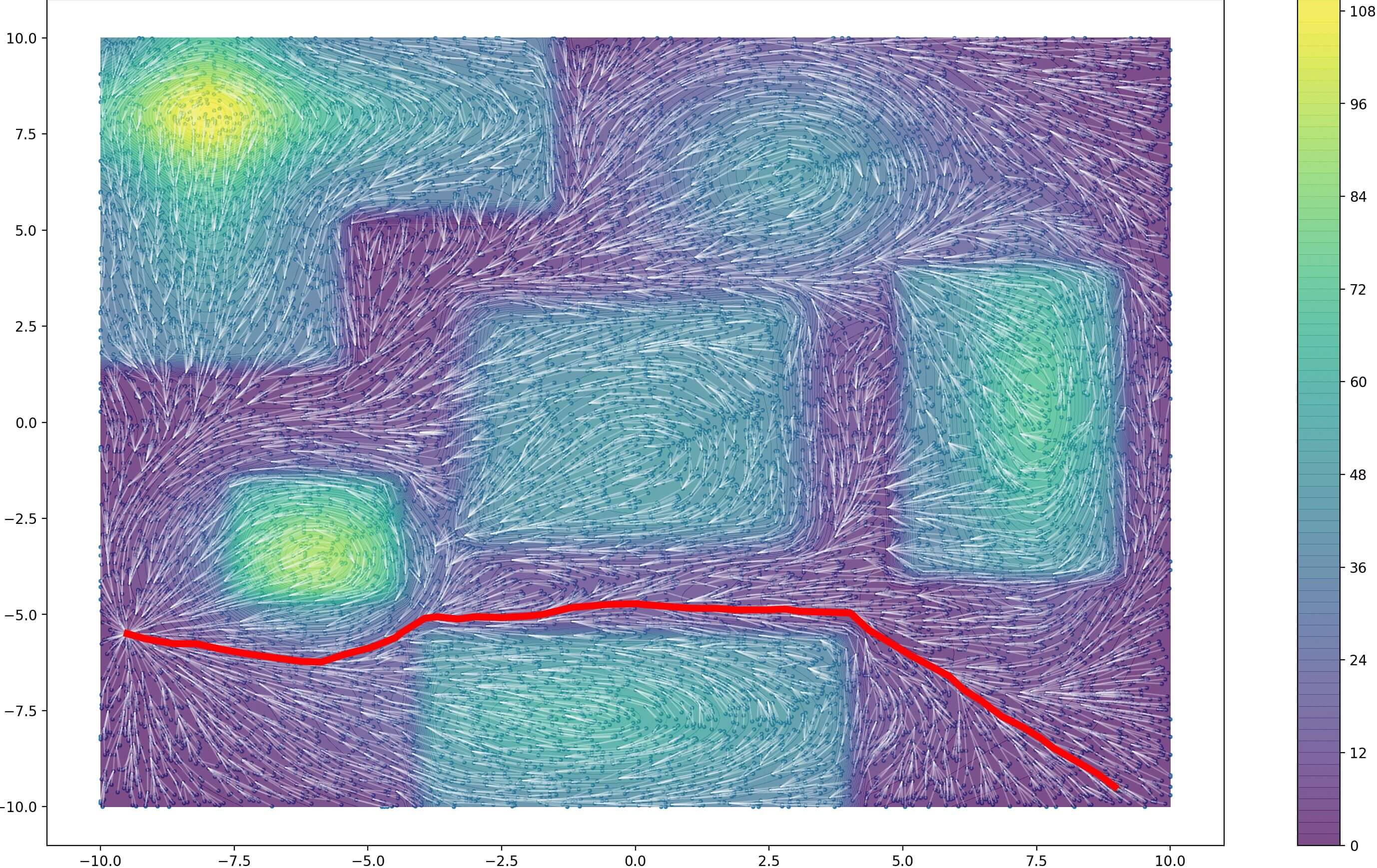}
        \centering
        \caption{Expected Risk}
        \label{fig:fire_env_exp}
    \end{subfigure}
    \caption[Caption]{Paths produced by CPT-RRT* under different
      perception models. White lines indicate the tree grown from the
      start position, red line indicates the optimal path to goal
      after $T=20,000$ iterations. Background color map depicts the
      CPT costs in (a)-(c) and expected costs in (d)}
    \label{fig:path_changes}
\end{figure} 

\color{black}
We also demonstrate the capability of our planner in 3D space, which is illustrated in Fig.~\ref{fig:3d_plan}. We employ a cubic configuration space measuring $10$ units, cluttered with randomly placed $50$ cube obstacles of unit volume. The start position is $(1,1,1)$ and goal position is $(9,9,9)$. There is also a continuous source of risk (modeled as a scaled normal distribution as before) centered at $(3,3,3)$. We use $\Theta=\{0.74,1,0.9,5\}$ (same as~\ref{fig:fire_env_cpt}) with $\gamma_{RRT*}=500$, $d=0.2$ and $\delta=10^{-6}$. From Fig.~\ref{fig:3d_plan}, we see that CPT-RRT is able to find a reasonably smooth path avoiding obstacles and the risky area, in similar number of iterations as in the 2D case. This shows the capability of our planner in 3D space. 

\begin{figure}
	\centering
	\includegraphics[width=0.8\linewidth]{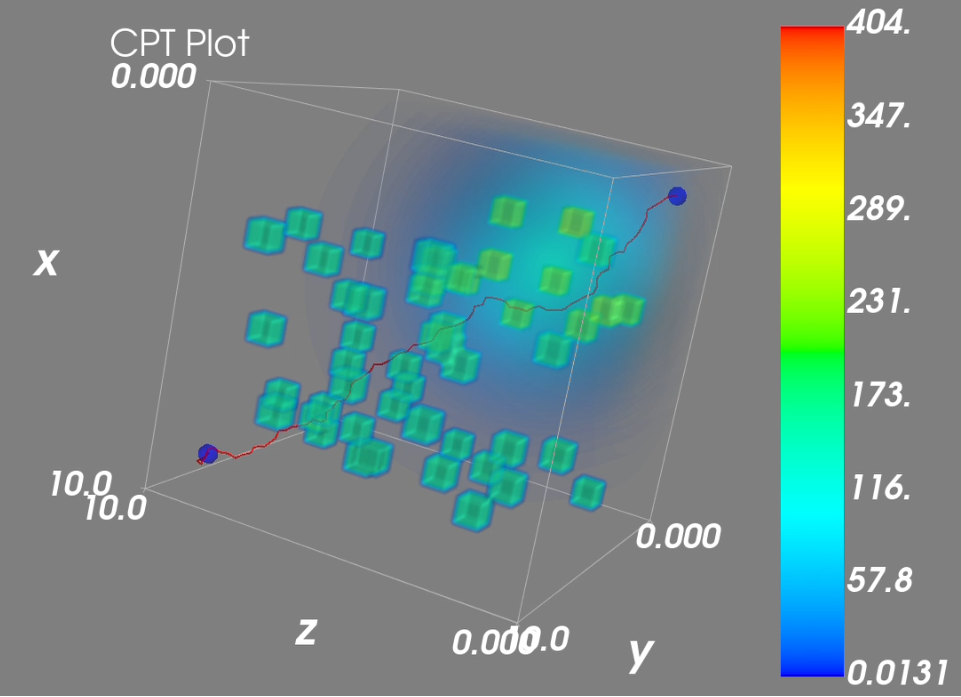}
	\caption{Path(Red line) by CPT-RRT* in 3D environment with 50 discrete random obstacles and a continuous risk source after 20,000 iterations.}
	\label{fig:3d_plan}
\end{figure}

%We have showed that CPT parameters $\Theta$ can be tuned to produce paths describing
%varying risk and uncertainty profiles.

%\margin{this sentence is incomplete}
%\color{black}
\color{black}
\paragraph*{Solution quality}Figure~\ref{fig:convergence} illustrates the empirical convergence and solution quality of the paths produced by our algorithm.
We performed empirical convergence tests, by running CPT-RRT* $100$ times with the same parameters and initial conditions and measuring the area between paths produced after every $500$ iterations for a total of $80,000$ iterations. The results are shown in Figure~\ref{fig:convergence_1}. We see that initially ($<10000$ iterations) there are changes in the output path as the space is being explored and the output path is changing. After $10,000$ iterations we consistently see minimal path changes indicating that the algorithm is converging towards a desirable path.
Then we also checked the solution quality of the path by computing the cost of the output path every $250$ iterations as shown in Figure~\ref{fig:convergence_2} for $100$ trials consisting of $25,000$ iteration. We see that the there is a consistent decrease in path cost in all the trials throughout. We also note that after $10,000$ iterations the cost decrease starts to plateau, indicating that the algorithm is close to a high quality (low cost) solution. 
From these observations of Figure~\ref{fig:convergence}, we recommend upwards of $T=10,000$ iterations to achieve
smooth and consistent paths in our setting.

\begin{figure}
\centering

\begin{subfigure}[t]{0.49\linewidth}\centering
        \includegraphics[width=\textwidth,height=1.2in]{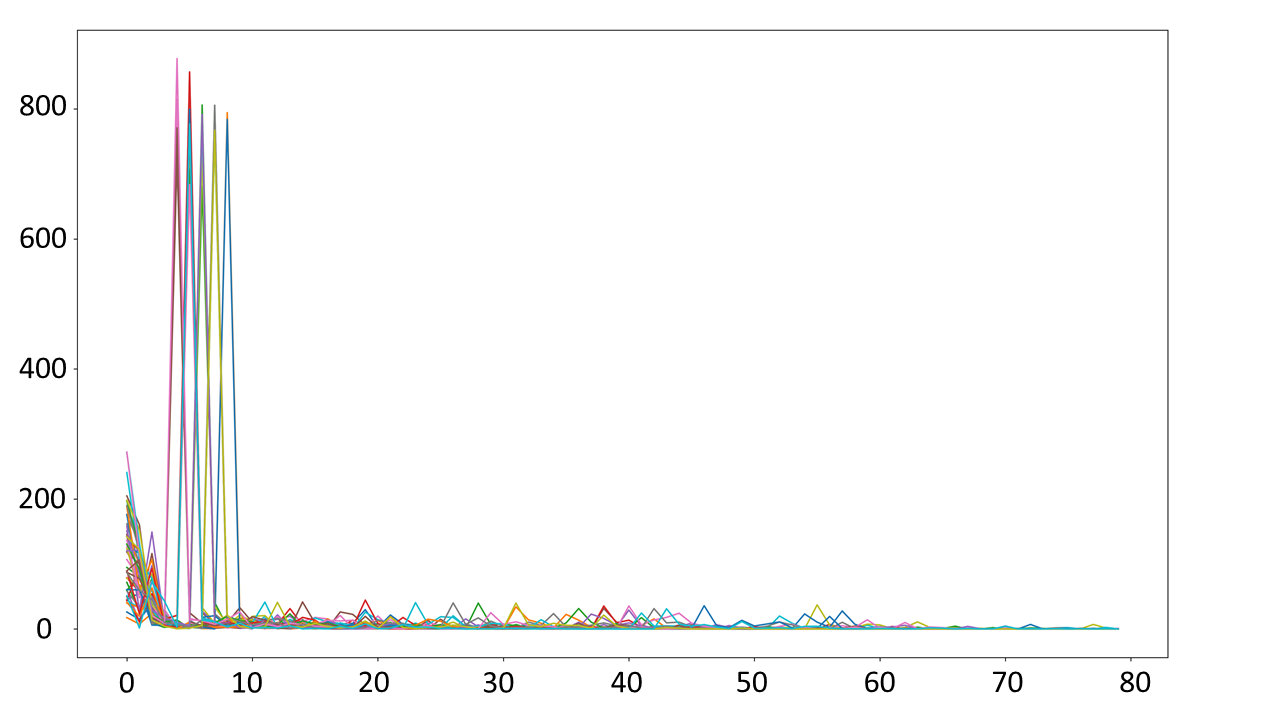}
        \caption{Convergence over iterations}
        \label{fig:convergence_1}
    \end{subfigure}%
    ~ 
    \begin{subfigure}[t]{0.49\linewidth}\centering
        \includegraphics[width=\textwidth,height=1.2in]{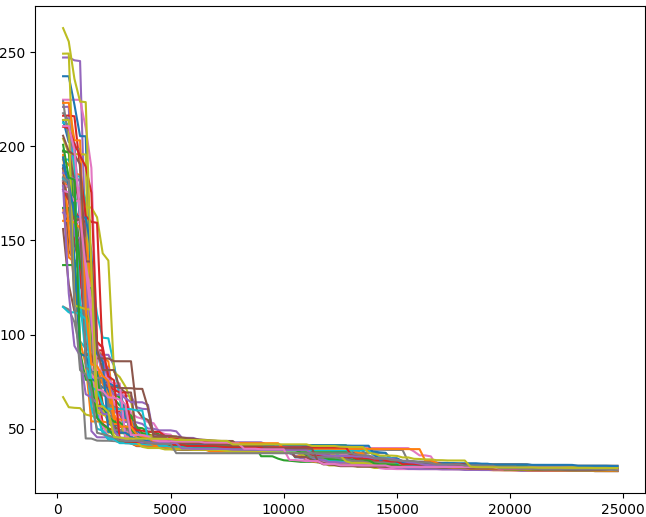}
        \centering
        \caption{Path cost over iterations}
        \label{fig:convergence_2}
    \end{subfigure}
\caption[Caption]{a) Empirical convergence analysis. The distance between paths after every $500$ iterations (y-axis) with the number of iterations in thousand (x-axis). b) Cost of the output path (y-axis) every $250$ iterations with the number of iterations (x-axis) }
\label{fig:convergence}
\end{figure} 

\color{black}

% We note the
%path smoothness and consistency is enhanced with higher iterations $T$, due to
%asymptotic properties of CPT-RRT*
%\margin{be careful with what algorithm you
 % refer to}
%algorithm.  
%We also note, with large iterations, collision avoidance can be achieved by choosing a $\supscr{\rho}{max}$ greater than the continuous risk sources and having a sufficiently high risk sensitivity $\gamma$. This can be observed in the simulations shown in Figure~\ref{fig:path_changes}.
%We also note that the iterations are
%faster compared to a traditional RRT* pipeline as there is no obstacle
%checking involved, which saves significant time in complex spaces.
%From Figures~\ref{fig:gamma_changes},~\ref{fig:lambda_changes}, it can
%be noted that flatter sections (areas with almost constant risk profiles) in the environment can produce similar
%paths for different $\Theta$, so non-flat environments 
%%\margin{what is
%  %a flat environment?}
%  are needed to capture the essence of nonlinear
%perception among DM. Future work will be devoted to the analysis of
%how non-flat environments lead to a clearer distinction of paths with
%varying the CPT parameters.
\begin{figure}
	\centering
	\begin{subfigure}[t]{0.49\linewidth}\centering
		\includegraphics[width=\textwidth,height=1.2in]{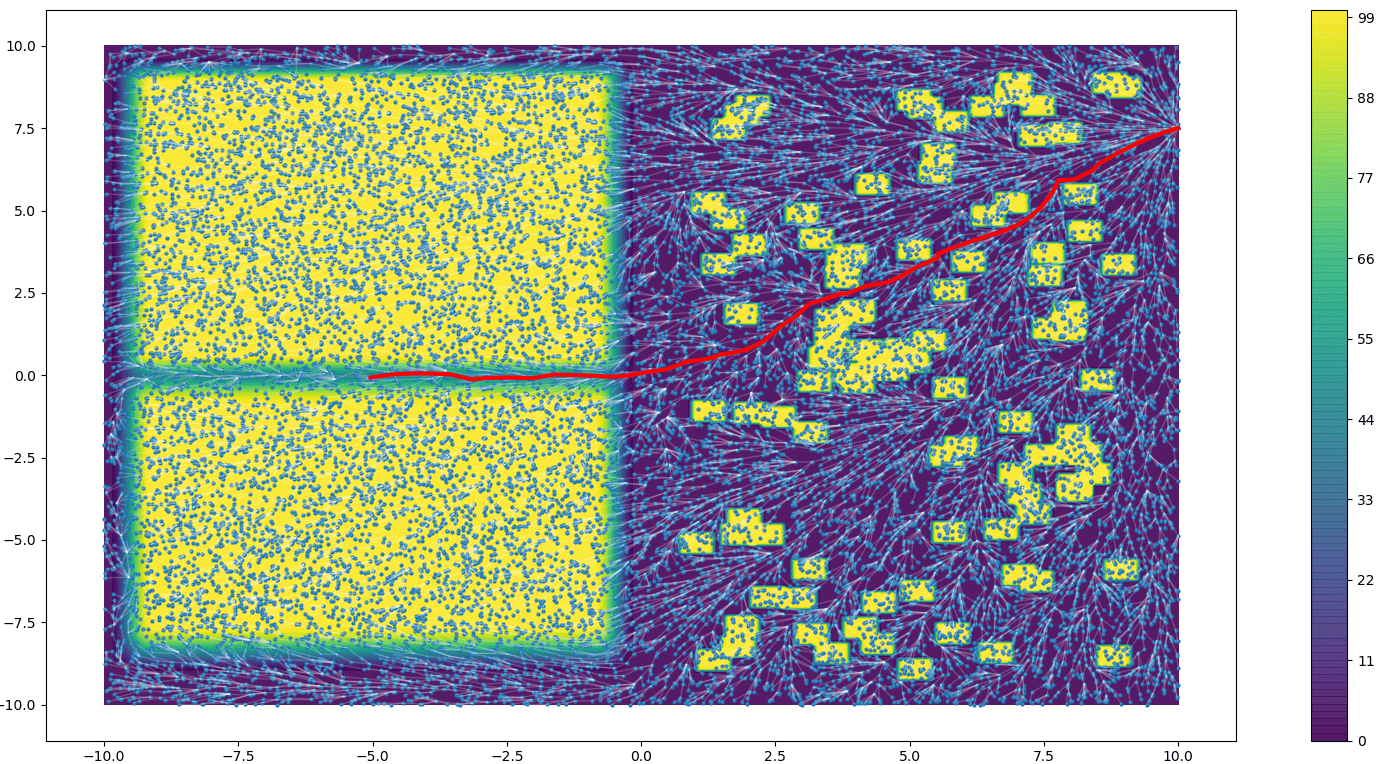}
		\caption[Caption]{Paths produced by CPT-RRT*.}
		\label{fig:cluttered_path1}
	\end{subfigure}%
	~
	\begin{subfigure}[t]{0.49\linewidth}\centering
		\includegraphics[width=\textwidth,height=1.2in]{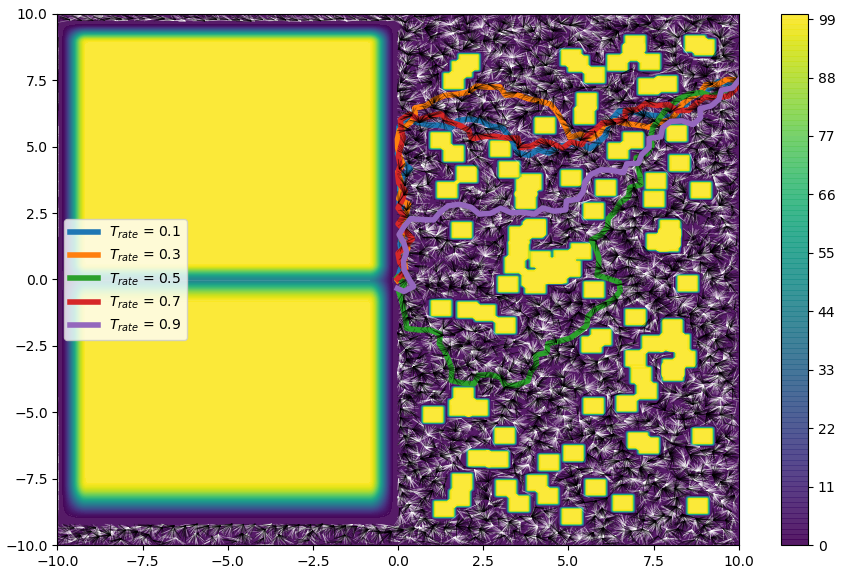}
		\caption[Caption]{Paths produced by T-RRT* using IC.}
		\label{fig:cluttered_pathIC}
	\end{subfigure}%
	 
	\begin{subfigure}[t]{0.49\linewidth}\centering
		\includegraphics[width=\textwidth,height=1.2in]{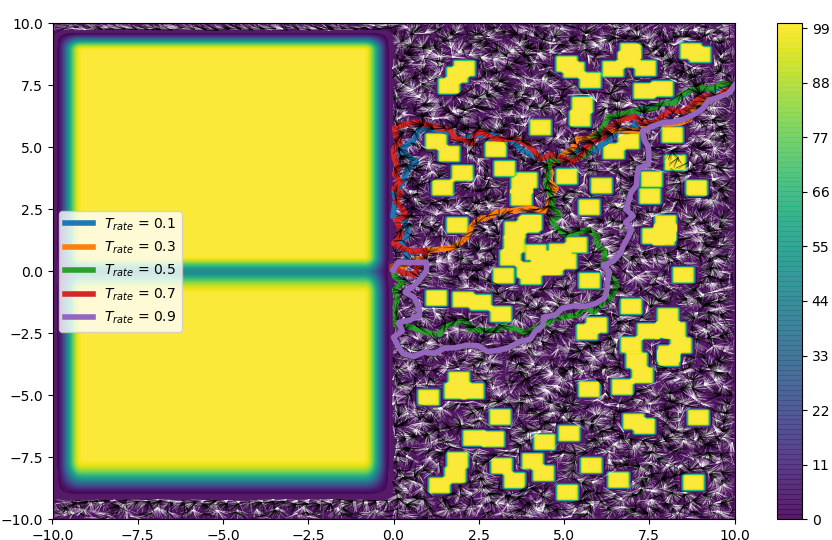}
		\caption[Caption]{Paths produced by T-RRT* using MW.}
		\label{fig:cluttered_pathMW}
	\end{subfigure}%
	~
	\begin{subfigure}[t]{0.49\linewidth}\centering
		\includegraphics[width=\textwidth,height=1.2in]{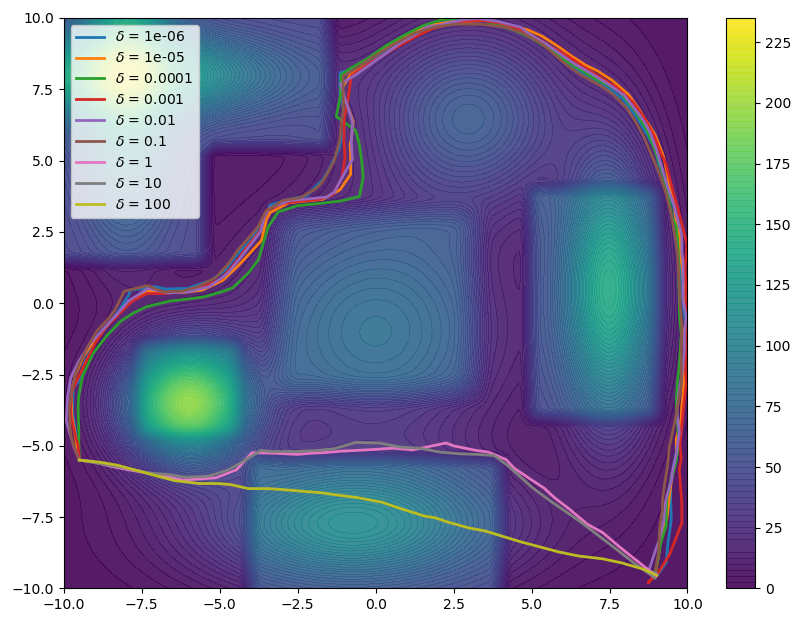}
		\caption{Paths produced by varying $\delta$}
		\label{fig:delta_paths}
	\end{subfigure}
	\caption[Caption]{Path comparison with T-RRT* and varying $\delta$. (a)-(c)Paths produced in a cluttered environment using $T=20,000$ iterations for CPT-RRT* and $20,000$ nodes for T-RRT*\footnote{We used nodes instead of iterations for T-RRT* to maintain an equal number of nodes in the tree, as a node does not get added if it fails the transition test}. (d) Paths produced by CPT-RRT* by varying $\delta$ with $\Theta=\{ 0.74,1,0.9,7.5\}$.}
	\label{fig:result_cluterred_comparison}
\end{figure}

\paragraph*{Comparison in narrow and cluttered environments} Here, we
will illustrate and compare the performance between our RRT* framework
and T-RRT*~\cite{DD-TS-JC:16} (another algorithm operating on
continuous cost spaces) in a cluttered environment with narrow
passages as shown in Figure~\ref{fig:result_cluterred_comparison}. To
construct this environment, we used 100 randomly placed small objects
on the right half and two big objects separated by a narrow passage on
the left half. Start point $x_s$ is on the top right corner and the
goal $x_g$ is at the center of the narrow passage. Bump functions
similar to previous paragraphs were used to construct a smooth spatial
cost $\rho$ from the obstacles. Since T-RRT* does not have risk
perception capabilities, for a fair comparison we use the continuous
cost $\rho$ to implement both algorithms. In this way, we will be able
to specifically compare the planning capabilities of both algorithms
in the same continuous cost environment. We used
$\subscr{\gamma}{RRT*}=100$ and $d=0.35$ for both algorithms. From
Figure~\ref{fig:cluttered_path1} we can see that our algorithm is able
to sample and generate paths in the narrow passage, as well as avoid
obstacles in a cluttered environment. In comparison, we can see that
from T-RRT* employing integral cost (IC) in
Figure~\ref{fig:cluttered_pathIC} and minimum work (MW) in
Figure~\ref{fig:cluttered_pathMW} cannot generate paths in the narrow
costly region fast enough irrespective of the $\subscr{T}{Rate}$ used
due to the sampling bias towards low-cost regions. Also, T-RRT* paths
do not appear to be as smooth as the paths from our framework,
irrespective of the cost(IC or MW) used. We also note that, the
cluttered and high cost environment induces a high failure rate of the
transition test, resulting in longer run times of T-RRT* required to
build the same number of nodes as our algorithm, especially for high
$T_{Rate}$ values.

\paragraph*{Comparison in dynamic environments}
\color{black} Here, we contrast the performance of 
CPT-RRT* and Risk-RRT*~\cite{WC-MQM:17} (a risk aware planner)
in a $10$ by $10$ environment area with static and moving obstacles as
shown in Figure~\ref{fig:result_dynamic_comparison}. To account for
risk dynamics, we will be planning in the space-time domain, and
assume knowledge of the dynamics (or a good estimate) of
$\rho_{\mu}(t)$ and $\rho_{\sigma}(t)$, which will result in a
time-varying perceived risk map $R^c(x,t)$. We also assume that each
edge in the tree will be traversed in some time $\Delta t$. The
underlying RRT* parameters employed by both algorithms were taken to
be identical, with $\gamma_{RRT*}=100$ and $d=0.25$, while
$\delta=0.1$ for CPT-RRT*. {\color{black} Our starting point is the
	same parametric CCR Risk Map~\cite{WC-HK-YT-AY-HA-MQM:16} as in
	Risk-RRT*, which generates a continuous, and time-varying, cost map
	based on the pose and velocity of a moving human as shown in
	Fig.~\ref{fig:CCR_map} (a snapshot). The human obstacles move back
	and forth within the indicated range (gray line), with top two
	obstacles moving $d$ units in $\Delta t$ time, while middle left
	obstacle moves at $0.1d$ units. }Since the CCR map does not
incorporate uncertainty, we will use it as the mean cost
$\rho_{\mu}(t)$. We employ a scaled normal distribution on top of each
source of dynamic risk (moving human) to denote $\rho_{\sigma}(t)$,
representing uncertainty for each source. From
$\rho_{\mu}(t)$ and $\rho_{\sigma}(t)$, we calculate $R^c(x,t)$
according to Algorithm~\ref{alg:CPT_env} which is visualized in
Fig.~\ref{fig:CCR_CPT_map}. {\color{black} Note that with higher risk (moving obstacles as compared to static or no obstacles),
	the lighter the color in the map. We compare our algorithm with Risk-RRT* in two
	scenarios.} At first, for fair comparison, and since Risk-RRT* does
not consider uncertainty or risk perception models, we use directly
the CCR cost map or $\rho_{\mu}(t)$ for planning. {\color{black}
	This corresponds to a rational DM model.} The results
are summarized in Fig.~\ref{fig:box_plots_CCR} which show path length
and cumulative risk returned by CPT-RRT* and Risk-RRT* using the CCR
map in 50 trials. {\color{black} In general, we see a lower performance
	in Risk-RRT* due to its conservative approach in dealing with
	risk. }First, since risk is not explicitly accounted for in the cost
function of Risk-RRT* and ``risk'' is treated as an ``obstacle'' to
avoid, the {\color{black} resulting} path produced by Risk-RRT* is
longer, even though its cost function optimizes path
length. {\color{black} The length of the path from our planner is
	shorter, with comparable cumulative risk of the output paths of both
	planners computed as in~\eqref{eqn:cpt_costs}}. Furthermore, $12$ out of the $50$ trials in case of
Risk-RRT* could not find a solution within 15,000 iterations. This
{\color{black} seems to be a consequence of a higher sample rejection
	rate due to the tight free spaces 
	created by the dynamic obstacles when close to other objects.}
{\color{black}This drawback is more pronounced when considering a
	risk-averse DM.}  {\color{black}
	Fig.~\ref{fig:CCR_CPT_map} represents such DM who perceives that
	getting close to the dynamic obstacles is highly risky, as compared
	to the perception of a rational DM represented by
	Fig.~\ref{fig:CCR_map}. In this way, the risk values in
	Fig.~\ref{fig:CCR_CPT_map} are in the range $0-421$, which is much
	higher than those of the CCR Map in Fig.~\ref{fig:CCR_map} (with
	ranges $0-100$).}  Due to higher risk values as given by this map,
the sample rejection in Risk-RRT* is very high and could not find a
feasible path in any of $50$ trials, whereas CPT-RRT* consistently
found a path similar to the one shown in Fig.~\ref{fig:CCR_CPT_map} in
all of the 50 trials.  \color{black}
\begin{figure*}
	\centering
	\begin{subfigure}[t]{0.25\linewidth}\centering
		\centering
		\includegraphics[width=\textwidth,height=1.15in]{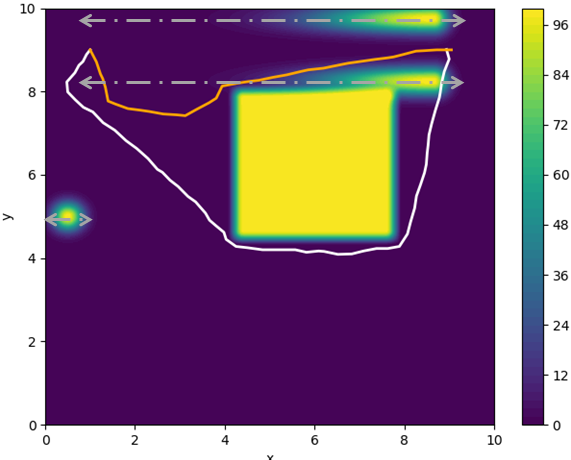}
		\caption[Caption]{Paths returned by CPT-RRT*(Orange) and Risk-RRT*(White) in CCR map}
		\label{fig:CCR_map}
	\end{subfigure}%
	~ 
	\begin{subfigure}[t]{0.45\linewidth}\centering
		\centering
		\includegraphics[width=\textwidth,height=1.15in]{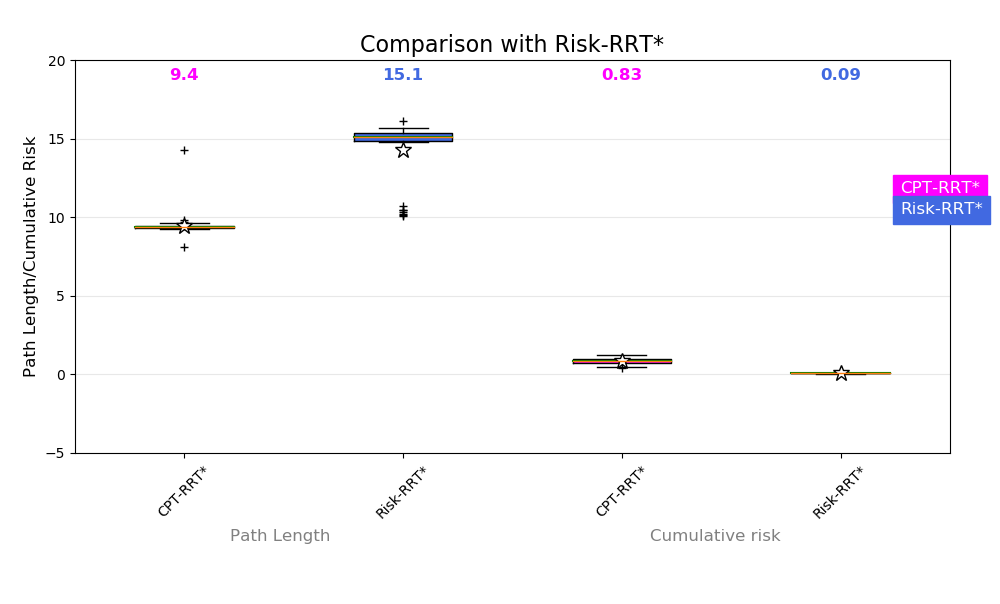}
		\caption{Boxplots showing path length and cumulative risk returned by CPT-RRT* and Risk-RRT* using the CCR map in 50 trials (38 trials for Risk-RRT* as 12 did not succeed in finding a path) consisting of 15,000 RRT* iterations. }
		\label{fig:box_plots_CCR}
	\end{subfigure}
	~
	\begin{subfigure}[t]{0.25\linewidth}\centering
		\centering
		\includegraphics[width=\textwidth,height=1.15in]{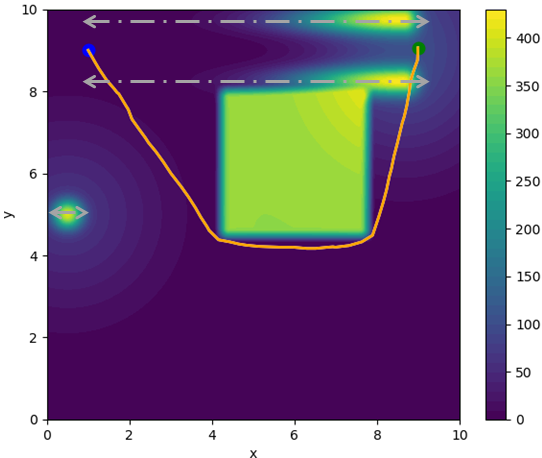}
		\caption[Caption]{Paths returned by CPT-RRT*(Orange) in risk averse perception of CCR map with $\Theta=\{0.74,1,0.88,6.25\}$}
		\label{fig:CCR_CPT_map}
	\end{subfigure}%
	\caption[Caption]{Comparison with Risk-RRT*. The CCR Map containing 3 moving humans and a stationary obstacles at initial time is shown in the background.  }
	\label{fig:result_dynamic_comparison}
\end{figure*}

%\paragraph*{Variation in $\delta$} Using the previous environment
%(Fig.~\ref{fig:fire_env}) and a cost and uncertainty averse profile
%(Fig.~\ref{fig:fire_env_cpt_risk_averse})), we run CPT-RRT* with
%$\delta$ varying from $\delta=10^{-6}$ to $\delta=10^{2}$ for
%$T=15,000$ iterations. The results are shown in
%Fig.~\ref{fig:delta_paths}. We can see that when $\delta \geq 1$ the
%path output changes reflecting an increase in urgency over risk and
%thus choosing shorter paths. When $\delta$ is comparable to the risk
%values (in this case $\delta=100$), we see that the paths no longer
%avoid the high risk area and can even go through the soft
%obstacles. From this study, we observe that $\delta$ needs to be
%rather small as compared to the given risk profile in order to ensure
%meaningful consideration of risk in the planning process. If explicit
%obstacle avoidance is a necessity, then a standard collision check can
%be performed prior to adding a node in the tree $\mathcal{G}$.
%\begin{figure}
%	\centering
%	\includegraphics[width=0.79\linewidth]{delta_results.png}
%	\caption{Paths produced by varying $\delta$}
%	\label{fig:delta_paths}
%\end{figure}
\paragraph*{Variation in $\delta$}  \color{black} Using the previous environment
(Figure~\ref{fig:fire_env}) and a cost and uncertainty averse profile
(Figure~\ref{fig:fire_env_cpt_risk_averse})), we run CPT-RRT* with
$\delta$ varying from $\delta=10^{-6}$ to $\delta=10^{2}$ for
$T=15,000$ iterations. The results are shown in
Figure~\ref{fig:delta_paths}. We can see that when $\delta \geq 1$ the
path output changes reflecting an increase in urgency over risk and
thus choosing shorter paths. When $\delta$ is comparable to the risk
values (in this case $\delta=100$), we see that the paths no longer
avoid the high risk area and can even go through the soft
obstacles. From this study, we observe that $\delta$ needs to be
rather small as compared to the given risk profile in order to ensure
meaningful consideration of risk in the planning process. If explicit
obstacle avoidance is a necessity, then a standard collision check can
be performed prior to adding a node in the tree $\mathcal{G}$.

%\begin{figure}
%	\centering
%	\includegraphics[width=0.79\linewidth]{delta_results.png}
%	\caption{Paths produced by varying $\delta$}
%	\label{fig:delta_paths}
%\end{figure}

Overall, our adaptation of CPT to the planning setting produces paths
that are logically consistent with a given risk
scenario. Additionally, our planning framework can explore narrow
corridors and cluttered environment and produce smooth paths quickly.

\subsection{CPT planner expressive power evaluation}
%\color{black}
\label{sec:results_cpt_learn}

We now discuss the proposed SPSA framework in
Section~\ref{sec:learning_SPSA} to gauge the adaptability of CPT as a
perception model to depict a drawn path $P_d$.  
To implement SPSA, we follow guidelines from \cite{JCS:03}. We
consider a Bernoulli distribution of $\Delta_k$ with support
$\{-1,1\}$ and equal probabilities, learning rate $a_k =
\frac{0.4}{(1.6 + k )^{0.601}}$ and perturbation parameter $c_k =
\frac{0.97}{(1.6 + k )^{0.301}}$.
%\margina{not sure if I should give all the details as we have
%  shortened section VI.}
We choose $\Theta_0=\{0.74,1,0.88,2.25\}$ for CPT throughout the
simulation, which are the nominal parameters from~\cite{AT-DK:92} and
$q_0=0.5$ for the CVaR variant. We use the same environment as in
Figure~\ref{fig:intro_env_perception} for all the simulations. Four
different paths $\{P^1_d,P^2_d,P^3_d,P^4_d\}$ are drawn by hand on the
expected risk profile (Figure~\ref{fig:fire_env_exp}) using a computer
mouse as shown in Figure~\ref{fig:drawn_paths}. Path $P^1_d$ is
similar to a path generated with expected risk perception
(Figure~\ref{fig:fire_env_exp}). Whereas, path $P^4_d$ and $P^2_d$ are
similar to paths generated with high risk aversion
(Figure~\ref{fig:fire_env_cpt_risk_averse}) and uncertainty
insensitivity (Figure~\ref{fig:fire_env_cpt_unc}) respectively. Path
$P^3_d$ is more challenging to represent as it shows an initial
aversion to risk and uncertainty and then takes a seemingly costlier
turn at the top.
We then use the SPSA approach described in
Section~\ref{sec:learning_SPSA} with a tolerance $\kappa=15$ and a
maximum of $10$ SPSA iterations per trial. We use $T_k=15000$
iterations and $\delta=0.01$ to implement Algorithm~\ref{alg:hi-rrt*}
to determine $P_\Theta$ in order to determine the loss $\Ar$ during
each SPSA iteration. For the CVaR variant, the planner
(Algorithm~\ref{alg:hi-rrt*}) replaces $R^c$ with $R^v$ in order to
use perceived risk according to CVaR while the rest of the RRT*
framework remains unchanged.  At the end each trial we get the area
(loss) $\Ar$ between the returned $P_\Theta$ and the drawn path
$P^x_d$.

\begin{figure*}
\centering
\begin{subfigure}[t]{0.39\linewidth}\centering
\centering
\includegraphics[width=\textwidth]{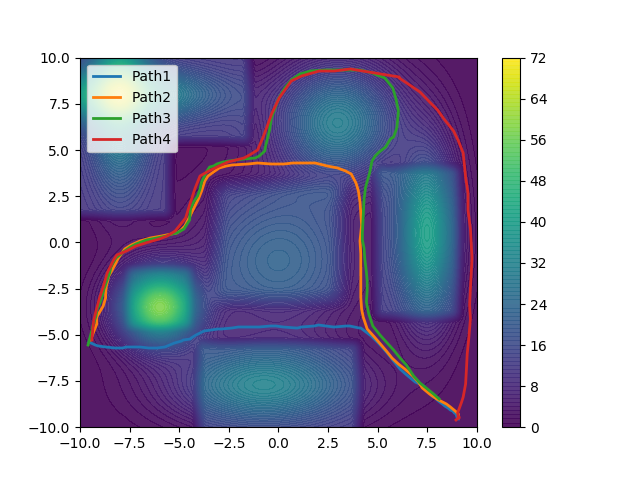}
\caption[Caption]{Four paths $\{P^1_d,P^2_d,P^3_d,P^4_d\}$ are drawn in blue, orange, green and red respectively.}
	\label{fig:drawn_paths}
\end{subfigure}%
~ 
\begin{subfigure}[t]{0.55\linewidth}\centering
\centering
	\includegraphics[width=\textwidth]{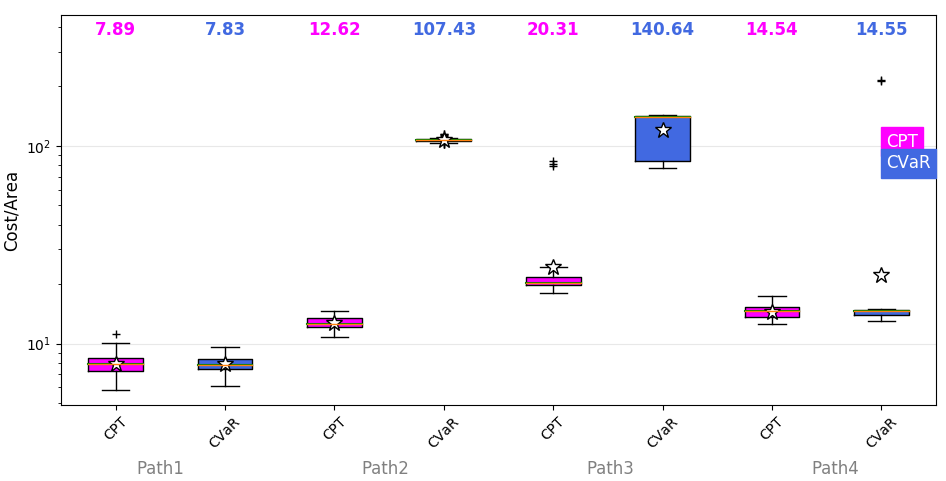}
	\caption{Boxplots showing the cost(Area) returned after using SPSA with CPT and CVaR to capture risk profile of the drawn paths. }
	\label{fig:box_plots}
\end{subfigure}
\caption[Caption]{Result of using CPT and CVaR to model drawn paths.}
\label{fig:result_CPT_eval}
\end{figure*}

We represent the statistics of the returned cost $\Ar$ as boxplots as
shown in Figure~\ref{fig:box_plots}.  Each box plot represents the
distribution of $50$ cost/Area $Ar$ values returned after each trial
for each path and perception model.  The Y-Axis represents the
cost/Area $\Ar$ in a base $10$ log scale.  We calculate a few sample
areas: $\Ar(P^1_d,P^2_d)=99.14, \Ar(P^2_d,P^3_d)=35.20$ and
$\Ar(P^3_d,P^4_d)=73.41$ to give a quantitative idea of the measure
$\Ar$ in this scenario to the reader.  The median values for each box
plot is indicated on the top row. The mean value of the distribution
is indicated as ``stars'', the black lines above and below the box
represent the range, and $+$ indicates outliers.
We observe that from Figure~\ref{fig:box_plots}, both Path $P^1_d$ and
Path $P^4_d$ were captured equally well with CVar and CPT with low
$\Ar$ values. Since both CPT and CVaR are generalizations of expected
risk, paths close (like $P^1_d$) to paths generated from expected risk
can be easily mimicked. Similarly, since CPT and CVaR are designed to
capture risk aversion, paths close (like $P^4_d$) to risk averse paths
(Figure~\ref{fig:fire_env_cpt_risk_averse}) can also be easily
captured.

However, we see a contrast in performance for path $P^2_d$ and path
$P^3_d$. CPT, on both occasions, is able to track the drawn paths
reasonably well with low $\Ar$ values. Whereas CVaR has consistently
higher (an order of magnitude) $\Ar$ values, indicating the inability
to capture the risk perception leading to path $P^2_d$ and path
$P^3_d$. This is due to the fact that CPT can handle uncertainty
perception independently from the cost (as seen between
Figure~\ref{fig:intro_CPT_unc1} and
Figure~\ref{fig:intro_CPT_unc2}). This ability is needed to capture
paths like $P^2_d$ and $P^3_d$ which is lacking in models like CVaR
and expected risk. This shows the generalizability of CPT over CVaR
with CPT having a richer modeling capability, thus validating the theoretical results from Lemma~\ref{lem:CPT_expressiveness}.

%We generate trajectories in this
%environment using Algorithm~\ref{alg:CPT_env}~and~\ref{alg:hi-rrt*}
%for different initial conditions $x_s$,  $x_g$, and CPT parameters
%$\Theta_d$ which serve as ground truth. We use $P_d$ which denotes the
%path generated by parameters $\Theta_d$ as input to the proposed SPSA
%learner in Section~\ref{sec:learning_SPSA}. We then evaluate the
%estimated parameter $\Theta$ from applying
%Algorithm~\ref{alg:CPT_SPSA} and compute $\|P_d - P_{\Theta_{k+1}} \|$
%which serves as the noisy measurement of the loss function. 
%\paragraph{Simulations}
%\label{sec:results_cpt_learn_sims}

\color{black}

\section{Conclusions and Future Work }
\label{sec:conclussions}

We have proposed a novel adaptation of CPT to model a
DM's non-rational perception of a risky environment in the context of
path planning.
%The approach combines
%diverse tools from Path Planning, Behavioral Economics, Optimization
%and Robotics for this purpose. 
Firstly, using CPT, we
provide a tuning knob to model various risk
perceptions of an uncertain spatial cost. Next, we demonstrate a novel embedding of
non-rational risk perception into a sampling based planner, the
CPT-RRT*, to plan
asymptotically optimal paths in perceived environments. Finally, we theoretically and empirically evaluate CPT as a good
approximator to the risk perception of arbitrary drawn paths by
comparing against CVaR, and show that CPT is a richer model
approximator. Future work will analyze how CPT can be used to learn
the risk profile of a DM using learning frameworks and conducting user studies.

\bibliographystyle{IEEEtran}
 \bibliography{alias,SM,SMD-add}
\end{document}